\newtheorem{proposition}{Proposition}[section]
\newtheorem{definition}{Definition}[section]
\newtheorem{remark}{Remark}[section]
\newenvironment{proof}{\textit{Proof:}}{\hfill$\square$\\}
\newcommand{\kpoly}{\mathcal{P}}
\newcommand{\pd}{\partial}
\newcommand{\mr}{\mathbb{R}}
\newcommand{\mn}{\mathbb{N}}
\newcommand{\symnet}{SymNet}
\title{PDE-Net 2.0: Learning PDEs from Data with A Numeric-Symbolic Hybrid Deep Network}
\author{
  Zichao Long \\ 
  School of Mathematical Sciences\\
  Peking University\\
  \texttt{zlong@pku.edu.cn} \\
   \And
 Yiping Lu  \\
  School of Mathematical Sciences\\
  Peking University\\
  \texttt{luyiping9712@pku.edu.cn} \\
  \AND
 Bin Dong \\
 Beijing International Center for Mathematical Research\\
 and Center for Data Science\\
 Peking University \\
 \texttt{dongbin@math.pku.edu.cn}
}
\begin{document}

\maketitle

\begin{abstract}
Partial differential equations (PDEs) are commonly derived based on empirical observations. However, recent advances of technology enable us to collect and store massive amount of data, which offers new opportunities for data-driven discovery of PDEs. In this paper, we propose a new deep neural network, called PDE-Net 2.0, to discover (time-dependent) PDEs from observed dynamic data with minor prior knowledge on the underlying mechanism that drives the dynamics. The design of PDE-Net 2.0 is based on our earlier work \cite{Long2018PDE} where the original version of PDE-Net was proposed. PDE-Net 2.0 is a combination of numerical approximation of differential operators by convolutions and a symbolic multi-layer neural network for model recovery. Comparing with existing approaches, PDE-Net 2.0 has the most flexibility and expressive power by learning both differential operators and the nonlinear response function of the underlying PDE model. Numerical experiments show that the PDE-Net 2.0 has the potential to uncover the hidden PDE of the observed dynamics, and predict the dynamical behavior for a relatively long time, even in a noisy environment.
\end{abstract}

\keywords{Partial differential equations \and dynamic system \and convolutional neural network \and symbolic neural network}

\section{Introduction}

Differential equations, especially partial differential equations(PDEs), play a prominent role in many disciplines to describe the governing physical laws underlying a given system of interest. Traditionally, PDEs are derived mathematically or physically based on some basic principles, e.g. from Schr\"{o}dinger's equations in quantum mechanics to molecular dynamic models, from Boltzmann equations to Navier-Stokes equations, etc. However, the mechanisms behind many complex systems in modern applications (such as many problems in multiphase flow, neuroscience, finance, biological science, etc.) are still generally unclear, and the governing equations of these systems are commonly obtained by empirical formulas \cite{brennen2005fundamentals,efendievmathematical}. With the recent rapid development of sensors, computational power, and data storage in the last decade, huge quantities of data can now be easily collected, stored and processed. Such vast quantity of data offers new opportunities for data-driven discovery of (potentially new) physical laws. Then, one may ask the following interesting and intriguing question: can we learn a PDE model to approximate the observed complex dynamic data?

\subsection{Existing Work and Motivations}

Earlier attempts on data-driven discovery of hidden physical laws include \cite{bongard2007automated,schmidt2009distilling}. Their main idea is to compare numerical differentiations of the experimental data with analytic derivatives of candidate functions, and apply the symbolic regression and the evolutionary algorithm to determining the nonlinear dynamic system. When the form of the nonlinear response function of a PDE is known, except for some scalar parameters, \cite{raissi2018hidden} presented a framework to learn these unknown parameters by introducing regularity between two consecutive time step using Gaussian process. Later in \cite{raissi2017physicsII}, a PDE constraint interpolation method was introduced to uncover the unknown parameters of the PDE model. An alternative approach is known as the sparse identification of nonlinear dynamics (SINDy) \cite{brunton2016discovering,schaeffer2017learning,rudy2017data,chang2018identification,schaeffer2018extracting,wu2019learning}. The key idea of SINDy is to first construct a dictionary of simple functions and partial derivatives that are likely to appear in the equations. Then, it takes the advantage of sparsity promoting techniques (e.g. $\ell_1$ regularization) to select candidates that most accurately represent the data. In \cite{de2017deep}, the authors studied the problem of sea surface temperature prediction (SSTP). They assumed that the underlying physical model was an advection-diffusion equation. They designed a special neural network according to the general solution of the equation. Comparing with traditional numerical methods, their approach showed improvements in accuracy and computation efficiency.

Recent work greatly advanced the progress of PDE identification from observed data. However, SINDy requires to build a sufficiently large dictionary which may lead to high memory load and computation cost, especially when the number of model variables is large. Furthermore, the existing methods based on SINDy treat spatial and temporal information of the data separately and does not take full advantage of the temporal dependence of the PDE model. Although the framework presented by \cite{raissi2018hidden,raissi2017physicsII} is able to learn hidden physical laws using less data than the approach based on SINDy, the explicit form of the PDEs is assumed to be known except for a few scalar learnable parameters. The approach of \cite{de2017deep} is specifically designed for advection-diffusion equations, and cannot be readily extended to other types of equations. Therefore, extracting governing equations from data in a less restrictive setting remains a great challenge.

The main objective of this paper is to design a \textit{transparent} deep neural network to uncover hidden PDE models from observed complex dynamic data with minor prior knowledge on the mechanisms of the dynamics, and to perform accurate predictions at the same time. The reason we emphasize on both model recovery and prediction is because: 1) the ability to conduct accurate long-term prediction is an important indicator of accuracy of the learned PDE model (the more accurate is the prediction, the more confident we have on the underlying recovered PDE model); 2) the trained neural network can be readily used in applications and does not need to be re-trained when initial conditions are altered. Our inspiration comes from the latest development of deep learning techniques in computer vision. An interesting fact is that some popular networks in computer vision, such as ResNet\cite{he2016deep,he2016identity}, have close relationship with ODEs/PDEs and can be naturally merged with traditional computational mathematics in various tasks
 \cite{chen2015learning,weinan2017proposal,haber2017stable,sonoda2017double,Lu2018Beyond,chang2017multi,chen2018neural,qin2018data,wiewel2018latent,kim2018deep}.
However, existing deep networks designed in deep learning mostly emphasis on expressive power and prediction accuracy. These networks are not transparent enough to be able to reveal the underlying PDE models, although they may perfectly fit the observed data and perform accurate predictions. Therefore, we need to carefully design the network by combining knowledge from deep learning and numerical PDEs.

\subsection{Our Approach}

The proposed deep neural network is an upgraded version of our original PDE-Net \cite{Long2018PDE}. The main difference is the use of a symbolic network to approximate the nonlinear response function, which significantly relaxes the requirement on the prior knowledge on the PDEs to be recovered. During training, we no longer need to assume the general type of the PDE (e.g. convection, diffusion, etc.) is known. Furthermore, due to the lack of prior knowledge on the general type of the unknown PDE models, more carefully designed constraints on the convolution filters as well as the parameters of the symbolic network are introduced. We refer to this upgraded network as PDE-Net 2.0.

Assume that the PDE to be recovered takes the following generic form
\begin{equation*}
  U_t=F(U,\nabla U, \nabla^2 U,\ldots),\quad x \in \Omega\subset \mathbb{R}^2,\quad t\in [0,T].
\end{equation*}
PDE-Net 2.0 is designed as a feed-forward network by discretizing the above PDE using forward Euler in time and finite difference in space. The forward Euler approximation of temporal derivative makes PDE-Net 2.0 ResNet-like \cite{he2016deep,weinan2017proposal,Lu2018Beyond}, and the finite difference is realized by convolutions with trainable kernels (or filters). The nonlinear response function $F$ is approximated by a symbolic neural network, which shall be referred to as $\symnet$. All the parameters of the $\symnet$ and the convolution kernels are jointly learned from data. To grant full transparency to the PDE-Net 2.0, proper constraints are enforced on the $\symnet$ and the filters. Full details on the architecture and constraints will be presented in Section 2.

\subsection{Relation with Model Reduction}
Data-driven discovery of hidden physical laws and model reduction have a lot in common. Both of them concern on representing observed data using relatively simple models. The main difference is that, model reduction emphasis more on numerical precision rather than acquiring the analytic form of the model.

It is common practice in model reduction to use a function approximator to express the unknown terms in the reduced models, such as approximating subgrid stress for large-eddy simulation\cite{duraisamy2015new,duraisamy2019turbulence,ma2018model} or approximating interatomic forces for coarse-grained molecular dynamic systems\cite{noid2008multiscale,zhang2018deep}. Our work may serve as an alternative approach to model reduction and help with analyzing the reduced models.

\subsection{Novelty}

The particular novelties of our approach are that we impose appropriate constraints on the learnable filters and use a properly designed symbolic neural network to approximate the response function $F$. Using learnable filters makes the PDE-Net 2.0 more flexible, and enables more powerful approximation of unknown dynamics and longer time prediction (see numerical experiments in Section \ref{Sec:Results:Burgers} and Section \ref{Sec:Results:Heat}). Furthermore, the constraints on the learnable filters and the use of a deep symbolic neural network enable us to uncover the analytic form of $F$ with minor prior knowledge on the dynamic, which is the main advantage of PDE-Net 2.0 over the original PDE-Net. In addition, the composite representation by the symbolic network is more efficient and flexible than SINDy. Therefore, the proposed PDE-Net 2.0 is distinct from the existing learning based methods to discover PDEs from data.

\section{PDE-Net 2.0: Architecture, Constraints and Training}\label{Sec:PDE-Net}

Given a series of measurements of some physical quantities $\{U(t,x,y):\ t=t_0,t_1,\cdots,\ (x,y)\in\Omega\subset\mathbb{R}^2\}\subset\mr^d$ with $d$ being the number of physical quantities of interest, we want to discover the governing PDEs from the observed data $\{U(t,x,y)\}$. We assume that the observed data are associated with a PDE that takes the following general form:
\begin{equation}\label{E:PDE:General}
  U_t(t,x,y)=F(U,U_x,U_y,U_{xx},U_{xy},U_{yy},\ldots),
\end{equation}
here $U(t,\cdot): \Omega\mapsto \mathbb{R}^d$, $F(U,U_x,U_y,U_{xx},U_{xy},U_{yy},\ldots)\in\mr^d$, $(x,y) \in \Omega\subset\mathbb{R}^2$, $t\in [0,T]$. Our objective is to design a feed-forward network, called PDE-Net 2.0, to approximate the unknown PDE \eqref{E:PDE:General} from its solution samples in the way that: 1) we are able to reveal the analytic form of the response function $F$ and the differential operators involved; 2)  we can conduct long-term prediction on the dynamical behavior of the equation for any given initial conditions. There are two main components of the PDE-Net 2.0 that are combined together in the same network: one is automatic determination on the differential operators involved in the PDE and their discrete approximations; the other is to approximate the nonlinear response function $F$.
In this section, we start with discussions on the overall framework of the PDE-Net 2.0 and then introduce the details on these two components. Regularization and training strategies will be given near the end of this section.

\subsection{Architecture of PDE-Net 2.0}\label{SubSec:PDE-Net}

Inspired by the dynamic system perspective of deep neural networks \cite{chen2015learning,weinan2017proposal,haber2017stable,sonoda2017double,Lu2018Beyond,chang2017multi}, we consider forward Euler as the temporal discretization of the evolution PDE \eqref{E:PDE:General}, and unroll the discrete dynamics to a feed-forward network. One may consider more sophisticated temporal discretization which naturally leads to different network architectures \cite{Lu2018Beyond}. For simplicity, we focus on forward Euler in this paper.

\subsubsection{\textbf{\texorpdfstring{$\delta t$}{delta-t}-block:}}
Let $\tilde{U}(t+\delta t,\cdot)$ be the predicted value at time $t+\delta t$ based on $\tilde{U}(t,\cdot)$. Then, we design an approximation framework as follows
\begin{equation}\label{m7}
    \tilde{U}(t+\delta t, \cdot)\approx \tilde{U}(t,\cdot)+\delta t \cdot \symnet_m^k(D_{00}\tilde{U}, D_{01}\tilde{U}, D_{10}\tilde{U}, \cdots).
\end{equation}
Here, the operators $D_{ij}$ are convolution operators with the underlying filters denoted by $q_{ij}$, i.e. $D_{ij}u=q_{ij}\circledast u$. The operators $D_{10}$, $D_{01}$, $D_{11}$, etc. approximate differential operators, i.e. $D_{ij}u\approx\frac{\partial^{i+j}u}{\partial^ix\partial^jy}$. In particular, the operators $D_{00}$ is a certain averaging operator. The purpose of introducing the average operators in stead of simply using the identity is to improve the expressive power of the network and enables it to capture more complex dynamics. 

Other than the assumption that the observed dynamics is governed by a PDE of the form \eqref{E:PDE:General}, we assume that the highest order of the PDE is less than some positive integer. Then, we can assume that $F$ is a function of $m$ variables with known $m$. The task of approximating $F$ in \eqref{E:PDE:General} is equivalent to a multivariate regression problem. In order to be able to identify the analytic form of $F$, we use a symbolic neural network denote by $\symnet_m^k$ to approximate $F$, where $k$ denotes the depth of the network. Note that, if $F$ is a vector function, we use multiple $\symnet_m^k$ to approximate the components of $F$ separately. 

Combining the aforementioned approximation of differential operators and the nonlinear response function, we obtain an approximation framework \eqref{m7} which will be referred to as a $\delta t$-block~ (see Figure \ref{Figure:dt:block}). Details of these two components can be found later in Section \ref{SubSec:Conv:and:Diff} and Section \ref{SubSec:SymbolicSubNet}.

\begin{figure}[ht]
\begin{center}
\includegraphics[width=0.8\linewidth]{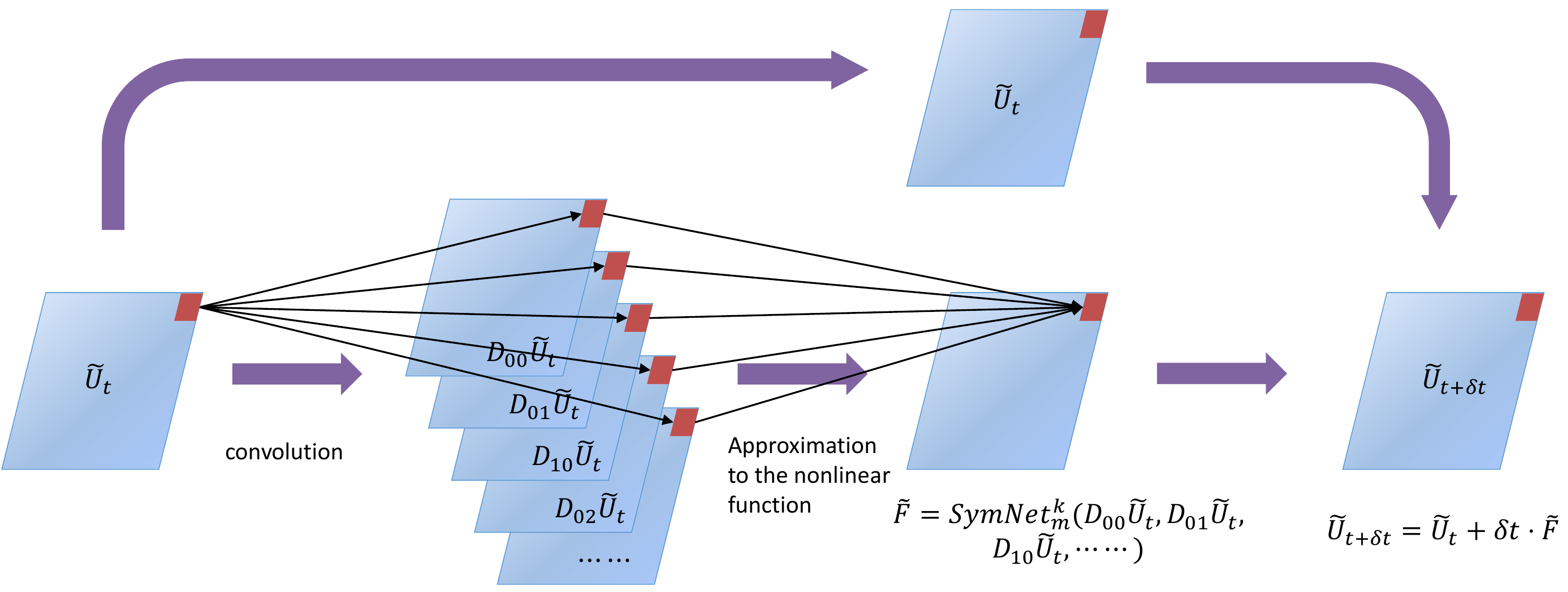}
\end{center}
\caption{The schematic diagram of a $\delta t$-block.}\label{Figure:dt:block}
\label{fig1}
\end{figure}

\subsubsection{\textbf{Multiple \texorpdfstring{$\delta t$}{delta-t}-Blocks:}}

One $\delta t$-block only guarantees the accuracy of one-step dynamics, which does not take error accumulation into consideration. In order to facilitate a long-term prediction, we stack multiple $\delta t$-blocks into a deep network, and call this network the \textit{PDE-Net 2.0} (see Figure \ref{Figure:multi:dt:block}). The importance of stacking multiple $\delta t$-blocks will be demonstrated by our numerical experiments in Section \ref{Sec:Results:Burgers} and \ref{Sec:Results:Heat}.

The PDE-Net 2.0 can be easily described as: (1) stacking one $\delta t$-block multiple times; (2) sharing parameters in all $\delta t$-blocks. Given a set of observed data $\{U(t,\cdot)\}$, training a PDE-Net 2.0 with $n$ $\delta t$-blocks needs to minimize the accumulated error $||U(t+n\delta t,\cdot)-\tilde{U}(t+n\delta t,\cdot)||_2^2$, where $\tilde{U}(t+n\delta t,\cdot)$ is the output from the PDE-Net 2.0 (i.e. $n$ $\delta t$-blocks) with input $U(t,\cdot)$, and $U(t+n\delta t,\cdot)$ is observed training data.
\begin{figure}[ht]
\begin{center}
\includegraphics[width=0.8\linewidth]{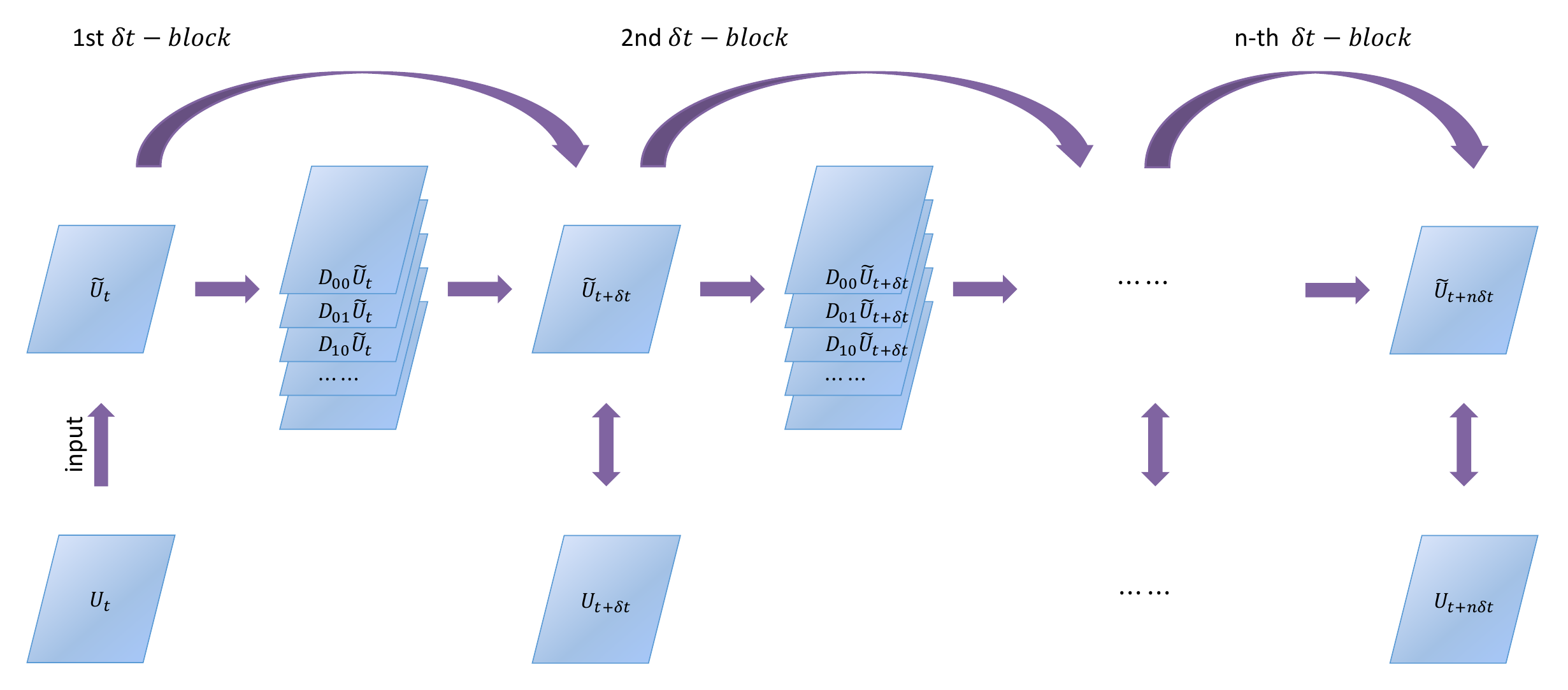}
\end{center}
\caption{The schematic diagram of the PDE-Net 2.0.}\label{Figure:multi:dt:block}
\end{figure}

\subsection{Convolutions and Differentiations}\label{SubSec:Conv:and:Diff}

In the original PDE-Net \cite{Long2018PDE}, the learnable filters are properly constrained so that we can easily identify their correspondence to differential operators. PDE-Net 2.0 adopts the same constrains as the original version of the PDE-Net. For completeness, we shall review the related notions and concepts, and provide more details.

A profound relationship between convolutions and differentiations was presented by \cite{cai2012image,dong2017image}, where the authors discussed the connection between the order of sum rules of filters and the orders of differential operators. Note that the definition of convolution we use follows the convention of deep learning, which is defined as
$$(f\circledast q)[l_1,l_2]:=\sum_{k_1,k_2}q[k_1,k_2]f[l_1+k_1,l_2+k_2].$$
This is essentially correlation instead of convolution in the mathematics convention. Note that if $f$ is of finite size, we use periodic boundary condition.

The order of sum rules is closely related to the order of vanishing moments in wavelet theory ~\cite{daubechies1992ten,mallat1999wavelet}. We first recall the definition of the order of sum rules.
\begin{definition}[Order of Sum Rules]
For a filter $q$, we say $q$ to have sum rules of order $\alpha=(\alpha_1,\alpha_2)$, where $\alpha \in \mathbb{Z}^2_+$, provided that
\begin{small}
\begin{equation}\label{m1}
    \sum_{k\in \mathbb{Z}^2}k^\beta q[k]=0
\end{equation}
\end{small}
for all $\beta=(\beta_1,\beta_2) \in \mathbb{Z}^2_+$ with $|\beta|:=\beta_1+\beta_2<|\alpha|$ and for all $\beta \in \mathbb{Z}^2_+$ with $|\beta|=|\alpha|$ but $\beta \neq \alpha$. If (\ref{m1}) holds for all $\beta \in \mathbb{Z}^2_+$ with $|\beta|<K$ except for $\beta \neq \bar{\beta}$ with certain $\bar{\beta} \in \mathbb{Z}^2_+$ and $|\bar{\beta}|=J<K$, then we say $q$ to have total sum rules of order $K\backslash \{J+1\}$.
\end{definition}

In practical implementation, the filters are normally finite and can be understood as matrices. For an $N \times N$ filter $q$ ($N$ being an odd number), assuming the indices of $q$ start from $-\frac{N-1}{2}$, \eqref{m1} can be written in the following simpler form
\begin{equation*}
    \sum_{l=-\frac{N-1}{2}}^{\frac{N-1}{2}}\sum_{m=-\frac{N-1}{2}}^{\frac{N-1}{2}}l^{\beta_1}m^{\beta_2} q[l,m]=0.
\end{equation*}
The following proposition from \cite{dong2017image} links the orders of sum rules with orders of differential operators.

\begin{proposition}\label{m2}
  Let $q$ be a filter with sum rules of order $\alpha \in \mathbb{Z}^2_+$.
  Then for a smooth function $F(x)$ on $\mathbb{R}^2$, we have
  \begin{equation}
    \frac{1}{\varepsilon^{|\alpha|}}\sum_{k\in \mathbb{Z}^2}q[k]F(x+\varepsilon k)=C_\alpha \frac{\partial^\alpha}{\partial x^\alpha}F(x)+O(\varepsilon), \mbox{as}~\varepsilon \rightarrow 0.
  \end{equation}
If, in addition, $q$ has total sum rules of order $K\backslash \{|\alpha|+1\}$ for some $K>|\alpha|$, then
\begin{equation}\label{m3}
    \frac{1}{\varepsilon^{|\alpha|}}\sum_{k\in \mathbb{Z}^2}q[k]F(x+\varepsilon k)=C_\alpha \frac{\partial^\alpha}{\partial x^\alpha}F(x)+O(\varepsilon^{K-|\alpha|}), \mbox{as}~\varepsilon \rightarrow 0.
\end{equation}

\end{proposition}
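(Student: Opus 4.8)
The plan is to reduce the statement to a multivariate Taylor expansion of $F$ together with the moment identities encoded by the sum rules. First I would fix $x$ and expand $F(x+\varepsilon k)$ in powers of $\varepsilon$ with an explicit remainder,
\[
F(x+\varepsilon k)=\sum_{|\beta|\le n}\frac{\varepsilon^{|\beta|}}{\beta!}\,k^\beta\,\pd^\beta F(x)+R_n(x,\varepsilon k),
\]
using multi-index notation $\beta!=\beta_1!\beta_2!$, $k^\beta=k_1^{\beta_1}k_2^{\beta_2}$, where $R_n$ is the Lagrange (or integral) remainder of order $n+1$. I would take $n=|\alpha|+1$ for the first assertion and $n=K-1$ for the second. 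Multiplying by $q[k]$ and summing over $k$ then gives
\[
\sum_{k}q[k]F(x+\varepsilon k)=\sum_{|\beta|\le n}\frac{\varepsilon^{|\beta|}}{\beta!}\Big(\sum_k k^\beta q[k]\Big)\pd^\beta F(x)+\sum_k q[k]R_n(x,\varepsilon k).
\]
Since the filter $q$ is finitely supported, the sum over $k$ is finite and the interchange of the two sums is immediate, so no convergence issue arises.

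The heart of the argument is to invoke the definition of sum rules to eliminate moments. Because $q$ has sum rules of order $\alpha$, every moment $\sum_k k^\beta q[k]$ with $|\beta|<|\alpha|$ vanishes, as does every moment with $|\beta|=|\alpha|$ except the one at $\beta=\alpha$. Hence the lowest surviving explicit term is exactly $\frac{\varepsilon^{|\alpha|}}{\alpha!}\big(\sum_k k^\alpha q[k]\big)\pd^\alpha F(x)$, which after dividing by $\varepsilon^{|\alpha|}$ identifies the constant $C_\alpha=\frac{1}{\alpha!}\sum_k k^\alpha q[k]$. For the first claim, the remaining explicit terms all carry a factor $\varepsilon^{|\beta|}$ with $|\beta|\ge|\alpha|+1$, so after normalization by $\varepsilon^{-|\alpha|}$ they contribute $O(\varepsilon)$. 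For the refined estimate I would use the stronger hypothesis of total sum rules of order $K\backslash\{|\alpha|+1\}$: this forces $\sum_k k^\beta q[k]=0$ for all $|\beta|<K$ with the single exception $\beta=\alpha$, so all intermediate terms with $|\alpha|+1\le|\beta|\le K-1$ also drop out, and the first nonvanishing correction sits at order $\varepsilon^{K}$, becoming $O(\varepsilon^{K-|\alpha|})$ after normalization.

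The main technical point to handle carefully is the remainder $\sum_k q[k]R_n(x,\varepsilon k)$. With $n$ chosen as above, $R_n$ is of order $(\varepsilon|k|)^{n+1}$ with a constant controlled by the supremum of the order-$(n+1)$ derivatives of $F$ over the compact neighborhood $\{x+\varepsilon k\}$ (compact by finite support of $q$) and by the finitely many quantities $|k|^{n+1}|q[k]|$; these stay uniformly bounded as $\varepsilon\to0$, so the remainder is $O(\varepsilon^{n+1})$ and, after division by $\varepsilon^{|\alpha|}$, is absorbed into the stated error. The only place requiring genuine care is the moment bookkeeping against the exclusion set in the total-sum-rules definition, namely verifying that the excluded index $\bar\beta$ with $|\bar\beta|=J$ is precisely $\alpha$ with $J=|\alpha|$. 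This matching is what guarantees that the surviving leading term is the $\alpha$-th derivative and that no term of lower order, nor a different term of the same order $|\alpha|$, contaminates it; everything else is routine estimation.
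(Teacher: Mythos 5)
Your proof is correct and is essentially the argument the paper itself relies on: the paper imports this proposition from its reference [dong2017image] without a separate proof, but the derivation it does spell out for the moment matrix in \eqref{E:derivemomentmatrix} is exactly your Taylor-expansion-plus-moment-cancellation computation, with the sum rules killing all moments below (and at, except $\beta=\alpha$) the target order and the total sum rules killing the intermediate ones up to order $K$. Your identification of $C_\alpha=\frac{1}{\alpha!}\sum_k k^\alpha q[k]$ and the handling of the finitely supported remainder are both sound, so nothing further is needed.
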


According to Proposition~\ref{m2}, an $\alpha$th order differential operator can be approximated by the convolution of a filter with $\alpha$ order of sum rules. Furthermore, according to \eqref{m3}, one can obtain a high order approximation of a given differential operator if the corresponding filter has an order of total sum rules with $K>|\alpha|+k, k \geqslant 1$. For example, consider filter
\[
q=
\left(
\begin{array}{rrr}
1&0&-1\\
2&0&-2\\
1&0&-1
\end{array}
\right),
\]
It has a sum rules of order $(1,0)$, and a total sum rules of order $3\backslash\{2\}$. Thus, up to a constant and a proper scaling, $q$ corresponds to a discretization of $\frac{\partial}{\partial x}$ with second order accuracy.

For an $N\times N$ filter $q$, define the \textit{moment matrix} of $q$ as
\begin{equation}\label{mm1}
M(q)=(m_{i,j})_{N\times N},
\end{equation}
where
\begin{equation}\label{E:q2m}
m_{i,j}=\frac{1}{i!j!}\sum_{k_1,k_2=-\frac{N-1}{2}}^{\frac{N-1}{2}}k_1^{i}k_2^{j}q[k_1,k_2],i,j=0,1,\ldots,N-1.
\end{equation}
We shall call the $(i, j)$-element of $M(q)$ the $(i, j)$-moment of $q$ for simplicity.
For any smooth function $f:\mr^2\to\mr$, we apply convolution on the sampled version of $f$ with respect to the filter $q$. By Taylor's expansion, one can easily obtain the following formula
\begin{eqnarray}\label{E:derivemomentmatrix}
&&\sum_{k_1,k_2=-\frac{N-1}{2}}^{\frac{N-1}{2}}q[k_1,k_2]f(x+k_1\delta x,y+k_2\delta y) \nonumber \\
&=&\sum_{k_1,k_2=-\frac{N-1}{2}}^{\frac{N-1}{2}}q[k_1,k_2]\sum_{i,j=0}^{N-1}\frac{\pd^{i+j}f}{\pd^ix\pd^jy}\bigg|_{(x,y)}\frac{k_1^ik_2^j}{i!j!}\delta x^i\delta y^j+o(|\delta x|^{N-1}+|\delta y|^{N-1})\nonumber\\
&=&\sum_{i,j=0}^{N-1}m_{i,j}\delta x^i\delta y^j\cdot\frac{\pd^{i+j}f}{\pd^ix\pd^jy}\bigg|_{(x,y)}+o(|\delta x|^{N-1}+|\delta y|^{N-1}).
\end{eqnarray}
From (\ref{E:derivemomentmatrix}) we can see that filter $q$ can be designed to approximate any differential operator with prescribed order of accuracy by imposing constraints on $M(q)$.

For example, if we want to approximate $\frac{\partial u}{\partial x}$ (up to a constant) by convolution $q\circledast u$ where $q$ is a $3\times 3$ filter, we can consider the following constrains on $M(q)$:
\begin{equation}\label{E:constraints:example}
  \left(\begin{array}{ccc}
    0&0&\star \\
    1&\star&\star\\
    \star&\star&\star
\end{array}
\right)\quad\mbox{or}\quad
\left(\begin{array}{ccc}
    0&0&0\\
    1&0&\star\\
    0&\star&\star
\end{array}
\right).
\end{equation}
Here, $\star$ means no constraint on the corresponding entry. The constraints described by the moment matrix on the left of \eqref{E:constraints:example} guarantee the approximation accuracy is at least first order, and the one on the right guarantees an approximation of at least second order. In particular, when all entries of $M(q)$ are constrained, e.g.
$$
M(q)=\left(\begin{array}{ccc}
    0&0&0\\
    1&0&0\\
    0&0&0
\end{array}
\right),$$
the corresponding filter can be uniquely determined. In the PDE-Net 2.0, all filters are learned subjected to partial constraints on their associated moment matrices, with at least second order accuracy.

\subsection{Design of \texorpdfstring{$\symnet$}{SymNet}: a Symbolic Neural Network}\label{SubSec:SymbolicSubNet}

The symbolic neural network $\symnet_m^k$ of the PDE-Net 2.0 is introduced to approximate the multivariate nonlinear response function $F$ of \eqref{E:PDE:General}. Neural networks have recently been proven effective in approximating multivariate functions in various scenarios \cite{poggio2017and,shaham2016provable,montanelli2017deep,wang2018exponential,he2018relu}. For the problem we have in hand, we not only require the network to have good expressive power, but also good transparency so that the analytic form of $F$ can be readily inferred after training. Our design of $\symnet_m^k$ is motivated by EQL/EQL$^\div$ proposed by \cite{martius2016extrapolation,sahoo2018learning}.

The $\symnet_m^k$, as illustrated in Figure \ref{Figure:symnet}, is a network that takes an $m$ dimensional vector as input and has $k$ hidden layers.
\begin{figure}[ht]
\centering
\includegraphics[width=0.5\linewidth]{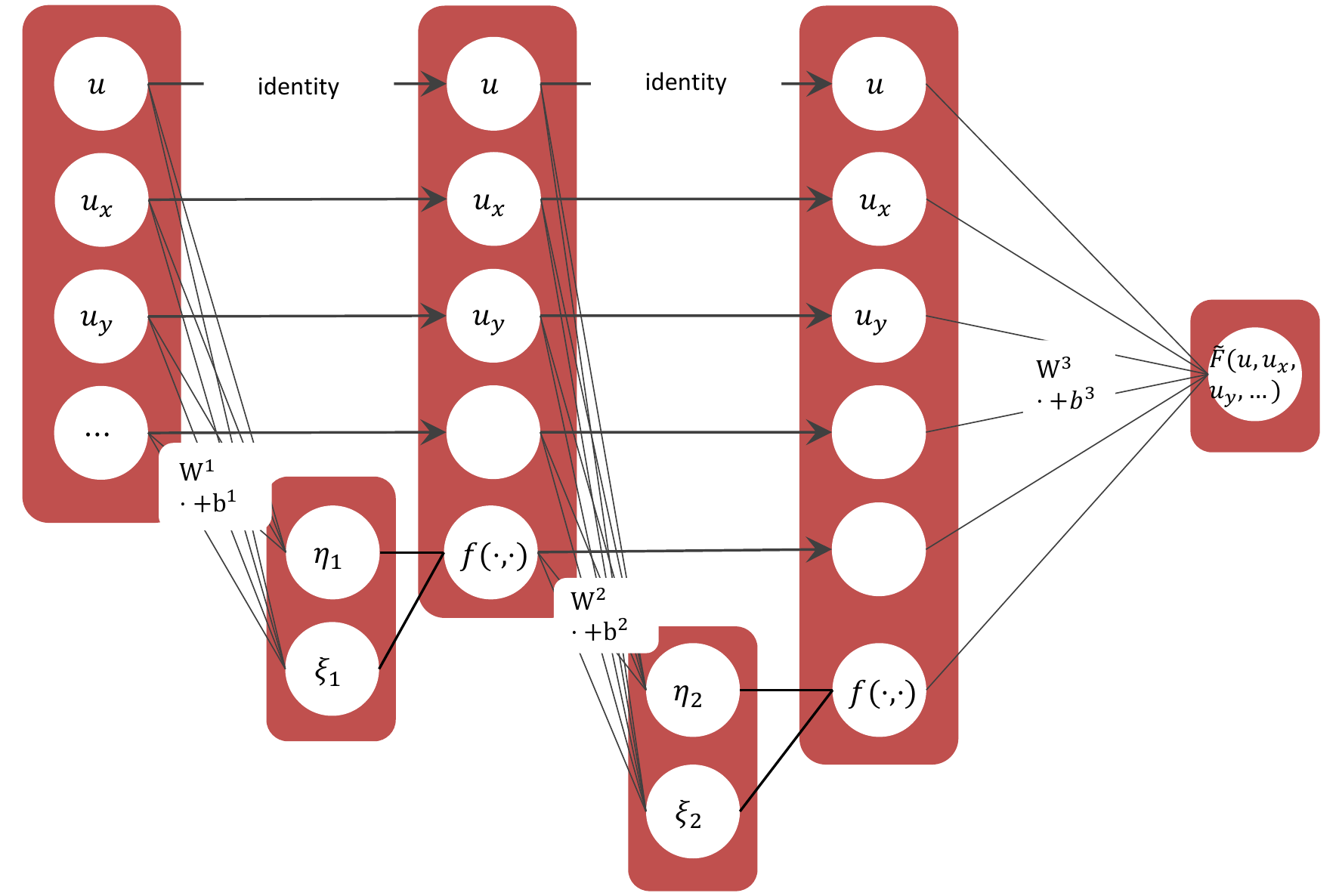}
\caption{The schematic diagram of $\symnet$}\label{Figure:symnet}
\end{figure}
Figure \ref{Figure:symnet} shows the symbolic neural network with two hidden layers, i.e. $\symnet_m^2$, where $f$ is a dyadic operation unit, e.g. multiplication or division. In this paper, we focus on multiplication, i.e. we take $f(a,b)=a\times b$. Different from EQL/EQL$^\div$, each hidden layer of the $\symnet_m^k$ directly takes the outputs from the preceding layer as inputs, rather than a linear combination of them. Furthermore, it adds one additional variable (i.e. $f(\cdot,\cdot)$) at each hidden layer. To better understand $\symnet_m^k$, we present an example in Algorithm \ref{Alg:symnet62} showing how $\symnet_6^2$ is constructed. In particular, when $b^i=0, i=1,2,3$, 
\begin{equation*}
W^1=\left(\begin{array}{cccccc}
1 & 0 & 0 & 0 & 0 & 0 \\
0 & 1 & 0 & 0 & 0 & 0
\end{array}\right),
W^2=\left(\begin{array}{ccccccc}
0 & 0 & 1 & 0 & 0 & 0 & 0\\
0 & 0 & 0 & 1 & 0 & 0 & 0
\end{array}\right)
\end{equation*}
and $W^3=(0,0,0,0,0,0,-1,-1)$, then $\symnet_6^2(u,u_x,u_y,v,v_x,v_y)=-uu_x-vu_y$ which is the right-hand-side of $\frac{\partial u}{\partial t}$ of the Burgers' equation without viscosity.

\begin{algorithm}[ht]\label{symnet:exp}
\caption{$\symnet_6^2$}\label{Alg:symnet62}
\textbf{Input:} $u,u_x,u_y,v,v_x,v_y\in\mr$,
\begin{algorithmic}
\STATE $(\eta_1,\xi_1)^\top = W^1\cdot(u,u_x,u_y,v,v_x,v_y)^\top+b^1,W^1\in\mr^{2\times6},b^1\in\mr^2$;
\STATE $f_1 = \eta_1\cdot\xi_1$;
\STATE $(\eta_2,\xi_2)^\top = W^2\cdot(u,u_x,u_y,v,v_x,v_y,f_1)^\top+b^2,W^2\in\mr^{2\times7},b^2\in\mr^2$;
\STATE $f_2=\eta_2\cdot\xi_2$;
\end{algorithmic}
\textbf{Output:} $F=W^3\cdot(u,u_x,u_y,v,v_x,v_y,f_1,f_2)^\top+b^3\in\mr,W^3\in\mr^{1\times8},b^3\in\mr$.
\end{algorithm}

The $\symnet_m^k$ can represent all polynomials of variables $(x_1,x_2,\ldots,x_m)$ with the total number of multiplications not exceeding $k$. If needed, one can add more operations to the $\symnet_m^k$ to increase the capacity of the network.

Now, we show that $\symnet_m^k$ is more compact than the dictionaries of SINDy. For that, we first introduce some notions.
\begin{definition}
Define the set of all polynomials of $m$ variables $(x_1,\cdots,x_m)$ with the total number of multiplications not exceeding $k$ as $\kpoly^k[x_1,\cdots,x_m]$. Here, the total number of multiplications of $\kpoly^k[x_1,\cdots,x_m]$ is counted as follows:
\begin{itemize}
\item For any monomial of degree $k$, if $k\geq2$, then the number of multiplications of the monomial is counted as $k-1$. When $k=1$ or $0$, the count is 0.
\item For any polynomial $P$, the total number of multiplications is counted as the sum of the number of multiplications of its monomials.
\end{itemize}
\end{definition}
For example, $\sum_{i=1}^mx_i+\sum_{i=1}^{k}x_ix_{i+1}$ and $\prod_{i=1}^{k+1}x_i$ with $k<m$ are all members of $\kpoly^k[x_1,\cdots,x_m]$. The elements in $\kpoly^k[x_1,\cdots,x_m]$ are of simple forms when $k$ is relatively small. The following proposition shows that $\symnet_m^k$ can represent all polynomials of variables $(x_1,x_2,\ldots,x_m)$ with the total number of multiplications not exceeding $k$. Note that the actual capacity of $\symnet_m^k$ is larger than $\kpoly^k[x_1,\cdots,x_m]$, i.e. $\kpoly^k[x_1,\cdots,x_m]$ is a subset of the set of functions that $\symnet_m^k$ can represent.

\begin{proposition}
For any $P\in\kpoly^k[x_1,\cdots,x_m]$, there exists a set of parameters for $\symnet_m^k$ such that
\[P=\symnet_m^k(x_1,\cdots,x_m).\]
\end{proposition}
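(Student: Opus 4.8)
The plan is to argue by induction on the depth $k$, uniformly over the number of variables $m$; that is, the statement I will induct on is that for \emph{every} $m$ and every $P\in\kpoly^k[x_1,\ldots,x_m]$ there is a choice of weights $\{W^i,b^i\}_{i=1}^{k+1}$ making $\symnet_m^k(x_1,\ldots,x_m)=P$. For the base case $k=0$ the network has no hidden unit and its output is simply $W^1\cdot(x_1,\ldots,x_m)^\top+b^1$, i.e.\ an arbitrary affine function; since $\kpoly^0[x_1,\ldots,x_m]$ is by definition exactly the set of polynomials with no multiplications, which are precisely the affine functions, the base case is immediate.

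For the inductive step I would assume the claim for depth $k-1$ and all numbers of variables, and take $P\in\kpoly^k[x_1,\ldots,x_m]$. If the multiplication count of $P$ is zero then $P$ is affine, and I realize it by forcing every hidden unit to vanish (set the $\eta_i$-row of $W^i$ and the matching entry of $b^i$ to zero so that $f_i=\eta_i\cdot\xi_i=0$) and reading $P$ off the final affine layer. Otherwise $P$ contains a monomial of degree $d\ge2$; I single out two of its factors $x_a,x_b$ and let the first hidden layer compute $f_1=x_a\cdot x_b$ by choosing $W^1,b^1$ so that $\eta_1=x_a$ and $\xi_1=x_b$. Introducing a fresh symbol $z$ and substituting $z$ for one occurrence of the product $x_ax_b$ in that monomial yields a polynomial $\tilde P(x_1,\ldots,x_m,z)$ with $\tilde P(x_1,\ldots,x_m,x_ax_b)=P$; under the convention that a degree-$d$ monomial costs $d-1$, this substitution lowers the multiplication count by exactly one, so $\tilde P\in\kpoly^{k-1}[x_1,\ldots,x_m,z]$. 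The induction hypothesis, applied with $m+1$ variables and depth $k-1$, then supplies a $\symnet_{m+1}^{k-1}(x_1,\ldots,x_m,z)$ representing $\tilde P$, and feeding $z=f_1$ into it produces the desired representation of $P$.

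The step that requires genuine care, and which I expect to be the main (though modest) obstacle, is verifying that ``first compute $f_1$, then run $\symnet_{m+1}^{k-1}$'' is literally an instance of $\symnet_m^{k}$ rather than merely an external composition of two networks. Here I would invoke the recursive design of the architecture: in $\symnet_m^k$ the pre-activation $(\eta_i,\xi_i)$ of the $i$-th hidden layer is an arbitrary affine function of $\{x_1,\ldots,x_m,f_1,\ldots,f_{i-1}\}$, whereas in $\symnet_{m+1}^{k-1}$ with inputs $(x_1,\ldots,x_m,z)$ the pre-activation of its $(i-1)$-th hidden unit is an arbitrary affine function of $\{x_1,\ldots,x_m,z,g_1,\ldots,g_{i-2}\}$. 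Identifying $z$ with the already-computed $f_1$ and the units $g_1,\ldots,g_{k-1}$ with $f_2,\ldots,f_k$, the two input sets coincide layer by layer and the final affine readouts agree, so the weights returned by the induction hypothesis can be copied directly into layers $2,\ldots,k+1$ of $\symnet_m^k$. The remaining items are routine: confirming the multiplication count drops by exactly one under $x_ax_b\mapsto z$ once the cost convention is written out, and padding any unused hidden layers by zero units in the affine case.
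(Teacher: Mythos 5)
Your proof is correct and follows essentially the same route as the paper's: induct on the depth $k$, peel off a product $x_ax_b$ from a monomial of degree at least two, replace it by a fresh variable to drop the multiplication count by one, and realize the product in the first hidden layer while the induction hypothesis (with $m+1$ inputs and depth $k-1$) handles the rest. The only differences are cosmetic — you start the induction at $k=0$ rather than $k=1$ and spell out the affine case and the layer-by-layer identification more explicitly than the paper does.
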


\begin{proof}
We prove this proposition by induction. When $k=1$, the conclusion obviously holds. Suppose the conclusion holds for $k$. For any polynomial $P\in\kpoly^{k+1}[x_1,\cdots,x_m]$, we only need to consider the cases when $P$ has a total number of multiplications greater than 1.

We take any monomial of $P$ that has degree greater than 1, which we suppose take the form $x_1x_2\cdot A$ where $A$ is a monomial of variable $(x_1,\ldots,x_m)$. Then, $P$ can be written as $P=x_1x_2 A+Q$. Define new variable $x_{m+1}=x_1x_2$. Then, we have \[P=x_{m+1}A+Q\in\kpoly^{k}[x_1,\cdots,x_m,x_{m+1}].\] By the induction hypothesis, there exists a set of parameters such that $P=\symnet_{m+1}^k(x_1,\cdots,x_{m+1})$.

We take the linear transform between the input layer and the first hidden layer of $\symnet_m^{k+1}$ as \[W^1\cdot(x_1,\cdots,x_m)^\top+b_1=(x_1,x_2)^\top.\] Then, the output of the first hidden layer is $x_1,x_2,\cdots,x_m,x_1x_2$. If we use it as the input of $\symnet_{m+1}^{k}$, we have
\begin{eqnarray*}
P(x_1,\cdots,x_m)&=&\symnet_{m+1}^k(x_1,\cdots,x_m,x_1x_2)\\
                    &=&\symnet_m^{k+1}(x_1,\cdots,x_m),
\end{eqnarray*}
which concludes the proof.
\end{proof}

SINDy constructs a dictionary that incudes all possible monomials up to a certain degree. Observe that there are totally $\binom{m+l}{l}$ monomials with $m$ variables and a degree not exceeding $l (\in\mn)$. Our symbolic network, however, is more compact than SINDy. The following proposition compares the complexity of $\symnet_m^k$ and SINDy, whose proof is straightforward.
\begin{proposition}
Let $P\in\kpoly^k[x_1,\cdots,x_m]$ and suppose $P$ have monomials of degree $\le l$.
\begin{itemize}
\item The memory load of $\symnet_m^k$ that approximates $P$ is $O(m+k)$. The number of flops for evaluating $\symnet_m^k$ is $O(k(m+k))$.
\item Constructing a dictionary with all possible polynomials of degree $l$ requires a memory load of $\binom{m+l}{l}$, and evaluation of a linear combination of dictionary members requires $O(\binom{m+l}{l})$ flops.
\end{itemize}
\end{proposition}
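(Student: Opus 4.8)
The plan is to treat this as a direct accounting of storage and arithmetic cost for the two competing representations, so that the whole argument reduces to counting parameters and flops once the conventions are fixed. First I would make precise what ``memory load'' means: for $\symnet_m^k$ it is the number of effective (i.e.\ nonzero) parameters needed in the sparse realization supplied by the previous proposition, while for SINDy it is the number of stored coefficients, one per dictionary element. With these conventions the two assertions become two independent counting exercises that do not interact.

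For the network I would work directly from the layer structure illustrated in Algorithm~\ref{Alg:symnet62} together with the inductive construction in the preceding proof. That construction realizes $P$ using $k$ hidden layers, each contributing exactly one multiplication unit $f_i=\eta_i\cdot\xi_i$, where $\eta_i,\xi_i$ are obtained by selecting two of the previously available quantities. Counting nonzeros, each hidden layer needs only $O(1)$ parameters (the two selections plus a bias), so the hidden layers together cost $O(k)$; the output layer is a single linear combination over the $m+k$ quantities $x_1,\dots,x_m,f_1,\dots,f_k$, costing $O(m+k)$. Adding these gives the claimed memory load $O(m+k)$. For the flop count I would instead read off the dense architecture layer by layer: hidden layer $i$ forms two inner products of length $m+i-1$ and one scalar product, i.e.\ $O(m+i)$ operations, and the output inner product has length $m+k$. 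Summing over $i=1,\dots,k$,
\begin{equation*}
\sum_{i=1}^{k}O(m+i)=O\!\left(km+\tfrac{k(k+1)}{2}\right)=O(k(m+k)),
\end{equation*}
which already dominates the $O(m+k)$ output cost and yields the stated $O(k(m+k))$.

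For SINDy the single quantity to establish is the dictionary size. I would count the monomials in $m$ variables of degree exactly $d$ by stars and bars, obtaining $\binom{m+d-1}{d}$, and then sum over $d=0,\dots,l$. The hockey-stick identity $\sum_{d=0}^{l}\binom{m-1+d}{d}=\binom{m+l}{l}$ collapses the sum to $\binom{m+l}{l}$, which is simultaneously the number of stored coefficients (the memory load) and, up to a constant factor, the number of multiply--adds required to evaluate a linear combination over the entire dictionary, hence $O(\binom{m+l}{l})$ flops.

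None of the individual estimates is deep; the only genuine subtlety---and the step I would treat most carefully---is the notion of ``memory load'' for $\symnet_m^k$. Taken literally, the dense architecture stores $\Theta(km+k^2)$ parameters, so the $O(m+k)$ bound is correct only under the sparse construction, where most weights are zero and need not be stored. I would therefore state this convention explicitly and verify that the construction of the previous proposition does produce a representation with $O(m+k)$ nonzero weights, so that the comparison with SINDy's $\binom{m+l}{l}$ coefficients is made on the same footing; the flop counts, by contrast, are insensitive to this convention and follow immediately from the layer-by-layer tally.
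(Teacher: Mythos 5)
Your accounting is correct, and it is essentially the argument the paper has in mind: the paper states this proposition with the remark that its proof is straightforward and gives no proof of its own, so your layer-by-layer flop tally $\sum_{i=1}^{k}O(m+i)=O(k(m+k))$, the stars-and-bars/hockey-stick count $\binom{m+l}{l}$ of monomials of degree at most $l$, and the $O(m+k)$ memory bound are exactly the intended bookkeeping. The convention you rightly flag as the only subtle point is settled by the paper's own worked example (``the memory load of $\symnet_{12}^3$ is $15=m+k$''), which shows that memory load is meant to count the stored quantities $x_1,\dots,x_m,f_1,\dots,f_k$ (mirroring the count of stored dictionary columns for SINDy) rather than dense weight entries --- a reading under which your $O(m+k)$ bound holds without even invoking sparsity of the weight matrices, though your sparse-parameter reading yields the same asymptotics.
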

We use the following example to show the advantage of $\symnet$ over SINDy. Consider two variables $u,v$ and all of their derivatives of order $\le 2$:  $$(u,v,u_x,u_y,\cdots,v_{yy})\in\mathbb{R}^{12}.$$ Suppose the polynomial to be approximated is $P=-uu_x-vu_y+u_{xx}$. For $k=l=3$, the size of the dictionary of SINDy is $\binom{15}{3}=455$ and the computation of linear combination of the elements requires $909$ flops. The memory load of $\symnet_{12}^3$, however, is 15 and an evaluation of the network requires 180 flops. Therefore, $\symnet$ can significantly reduce memory load and computation cost when input data is large. Note that when $k$ is large and $l$ small, $\symnet_m^k$ is worse than SINDy. However, for system identification problems, we normally wish to obtain a compact representation (i.e. smaller $k$). Thus, $\symnet$ takes full advantage of this prior knowledge and can significantly save on memory and computation cost which is crucial in the training of the PDE-Net 2.0.

\subsection{Loss Function and Regularization}

We adopt the following loss function to train the proposed PDE-Net 2.0:
\[L=L^{data}+\lambda_1L^{moment}+\lambda_2L^{\symnet},\]
where the hyper-parameters $\lambda_1$ and $\lambda_2$ are chosen as $\lambda_1=0.001$ and $\lambda_2=0.005$. Now, we present details on each of the term of the loss function and introduce pseudo-upwind as an additional constraint on PDE-Net 2.0.

\subsubsection{Data Approximation \texorpdfstring{$L^{data}$}{L-data}}

Consider the data set $\{U_j(t_i,\cdot): 1\le i\le n, 1\le j\le N\}$, where $n$ is the number of $\delta t$-blocks and $N$ is the total number of samples. The index $j$ indicates the $j$-th solution path with a certain initial condition of the unknown dynamics. Note that one can split a long solution path into multiple shorter ones. We would like to train the PDE-Net 2.0 with $n$ $\delta t$-blocks. For a given $n\ge1$, every pair of the data $\{U_j(t_0,\cdot), U_j(t_i,\cdot)\}$, for each $j$ and $i\leq n$, is a training sample, where $U_j(t_0,\cdot)$ is the input and $U_j(t_{i},\cdot)$ is the label that we need to match with the output from the network. For that, we define the data approximation term $L^{data}$ as:
\begin{equation*}
L^{data}=\frac1{n}\sum_{i=1}^n\sum_{j=1}^N l_{ij}/\delta t^2, \mbox{where} \quad l_{ij}=||U_j(t_{i},\cdot)-\tilde{U}_j(t_{i},\cdot)||_2^2,
\end{equation*}
where $\tilde{U}_j(t_{i},\cdot)$ is the output of the PDE-Net 2.0 with $\tilde{U}_j(t_0,\cdot)=U_j(t_0,\cdot)$ as the input.

\subsubsection{Regularization: \texorpdfstring{$L^{moment}$}{L-moment} and \texorpdfstring{$L^{\symnet}$}{L-SymNet}}

For a given threshold $s$, defined Huber's loss function $\ell_1^s$ as
\[\ell_1^s(x)=\left\{
\begin{array}{ll}
|x|-\frac{s}{2} & \text{if }|x|>s \\
\frac{1}{2s}x^2 & \text{else.}
\end{array}\right.
\]
Then, we define $L^{moment}$ as
\[L^{moment}=\sum_{i,j}\sum_{i_1,j_1} \ell_1^s(M(q_{ij})[i_1,j_1]),\] where ${q_{ij}}$ are the filters of PDE-Net 2.0 and $M(q)$ is the moment matrix of $q$. We use this loss function to regularize the learnable filters to reduce overfitting. In our numerical experiments, we will use $s=0.01$.

Given $\ell_1^s$ as defined above, we use it to enforce sparsity on the parameters of $\symnet$. This will help to reduce overfitting and enable more stable prediction. The loss function $L^{\symnet}$ is defined as
\[L^{\symnet}=\sum_{p\in\text{parameters of }\symnet} \ell_1^s(p).\]
We set $s=0.001$ in our numerical experiments.

\subsubsection{Pseudo-upwind}
In numerical PDEs, to ensure stability of a numerical scheme, we need to design conservation schemes or use upwind schemes \cite{liu1994weighted,shu1998essentially,leveque2002finite}. This is also important for PDE-Net 2.0 during inferencing. However, the challenge we face is that we do not know apriori the form or the type of the PDE. Therefore, we introduce a method called \textit{pseudo-upwind} to help with maintaining stability of the PDE-Net 2.0.

Given a 2D filter $q$, define the flipping operators $\text{flip}_x(q)$ and $\text{flip}_y(q)$ as
\[\left(\text{flip}_x(q)\right)[k_1,k_2] = -q[-k_1,k_2],\quad k_1,k_2=-(N-1)/2,\cdots,(N-1)/2\]
and
\[\left(\text{flip}_y(q)\right)[k_1,k_2] = -q[k_1,-k_2],\quad k_1,k_2=-(N-1)/2,\cdots,(N-1)/2.\]
In each $\delta t$-block of the PDE-Net 2.0, before we apply convolution with a filter, we first use $\symnet$ to determine whether we should use the filter or flip it first. We use the following univariate PDE as an example to demonstrate our idea. Given PDE $u_t=F(u,\cdots)$, suppose the input of a $\delta t$-block is $u$. The algorithm of pseudo-upwind is described by the following Algorithm \ref{Alg:pseudo-upwind}.


\begin{algorithm}[ht]
\caption{pseudo-upwind in $\delta t$-block}\label{Alg:pseudo-upwind}
\textbf{Input:} $u$
\begin{algorithmic}
\STATE $u_{01}=q_{01}\circledast u,u_{10}=q_{10}\circledast u$
\STATE $\tilde{F}_{0}=\symnet_6^k(u,u_{01},u_{10},q_{02}\circledast u,q_{11}\circledast u,q_{20}\circledast u)$
\IF {$\frac{\pd \tilde{F}_{0}}{\pd u_{01}}>0$}
    \STATE $u_x = u_{01}$
\ELSE
    \STATE $u_x = \text{flip}_x(q_{01})\circledast u$
\ENDIF
\IF {$\frac{\pd \tilde{F}_{0}}{\pd u_{10}}>0$}
    \STATE $u_y = u_{10}$
\ELSE
    \STATE $u_y = \text{flip}_y(q_{10})\circledast u$
\ENDIF
\STATE $\tilde{F}=\symnet_6^k(u,u_x,u_y,q_{02}\circledast u,q_{11}\circledast u,q_{20}\circledast u)$
\end{algorithmic}
\textbf{Return:} $u+\delta t\tilde{F}$
\end{algorithm}

\begin{remark}
Note that the algorithm does not exactly enforce upwind in general. This is why we call it pseudo-upwind. We further note that:
\begin{itemize}
\item Given a PDE of the form $u_t=G(u)u_x+H(u)u_y+\lambda(u_{xx}+u_{yy})$, we can use $G(u)$ and $H(u)$ to determine whether we should flip a filter or not.
\item For a vector PDE, such as $U=(u,v)^\top$, we can use, for instance, $\frac{\pd\tilde{F}_{0}}{\pd u_{01}},\frac{\pd\tilde{F}_{0}}{\pd v_{10}}$ to determine how we should approximate $u_x$ and $u_y$ in the $\delta t$-block \cite{shu1998essentially}.
\end{itemize}
\end{remark}

\subsection{Initialization and training}
In the PDE-Net 2.0, parameters can be divided into three groups: 1) moment matrices to generate convolution kernels; 2) the parameters of $\symnet$; and 3) hyper-parameters, such as the number of filters, the size of filters, the number of $\delta t$-Blocks and number of hidden layers of $\symnet$, regularization weights $\lambda_1,\lambda_2,\lambda_3$, etc. The parameters of the $\symnet$ are shared across the computation domain $\Omega$, and are initialized by random sampling from a Gaussian distribution. For the filters, we initialize $D_{01},D_{10}$ with second order pseudo-upwind scheme and central difference for all other filters. For example, if the size of the filters were set to be $5\times5$, then the initial values of the convolution kernels $q_{01},q_{02}\in\mr^{5\times5}$ are
\[
  q_{01}=\left(\begin{array}{ccccc}
      &   & \bm{0} &   &   \\
    0 & 0 &-3 & 4 &-1 \\ 
      &   & \bm{0} &   &   
  \end{array}
  \right),\quad
  q_{02}=\left(\begin{array}{ccccc}
      &   & \bm{0} &   &   \\
    0 & 1 &-2 & 1 & 0 \\ 
      &   & \bm{0} &   &   
  \end{array}
  \right).
\]

We use layer-wise training to train the PDE-Net 2.0. We start with training the PDE-Net 2.0 on the first $\delta$t-block with a batch of data, and
then use the results of the first $\delta$t-block as the initialization and restart training on the first two $\delta$t-blocks with another batch. Repeat this procedure until we complete all $n$ $\delta$t-blocks. Note that all the parameters in each of the $\delta$t-block are shared across layers. In addition, we add a warm-up step before the training of the first $\delta$t-block by fixing filters and setting regularization term to be 0 (i.e. $\lambda_1=\lambda_2=0$). The warm-up step is to obtain a good initial guess of the parameters of $\symnet$.

To demonstrate the necessity of having learnable filters, we will compare the PDE-Net 2.0 containing learnable filters with the PDE-Net 2.0 having fixed filters. To differentiate the two cases, we shall call the PDE-Net 2.0 with fixed filters the ``Frozen-PDE-Net 2.0''. Note that for Frozen-PDE-Net 2.0, the filters are fixed to be the initial values we choose to train the regular PDE-Net 2.0. This is a natural choice since when we know apriori that the PDE is Burgers' equation, it would be a stable finite difference scheme. However, intuitively speaking, freezing any finite difference approximations of the differential operators during training of PDE-Net 2.0 is not ideal, because you cannot possibly know which numerical scheme to use without knowing the form of the PDE. Therefore, for inverse problem, it is better to learn both the PDE model and the discretization of the PDE model simultaneously. This assertion is supported by our emperical comparisons between frozen and regular PDE-Net 2.0 in Table \ref{Tab:Analytic:Burgers} and \ref{Tab:Analytic:CDR}.

\section{Numerical Studies: Burgers' Equation}\label{Sec:Results:Burgers}
Burgers' equation is a fundamental partial differential equation in many areas such as fluid mechanics and traffic flow modeling. It has a lot in common with the Navier-Stokes equation, e.g. the same type of advective nonlinearity and the presence of viscosity.
\subsection{Simulated data, training and testing}
In this section we consider a 2-dimensional Burger's equation with periodic boundary condition on $\Omega=[0,2\pi]\times [0,2\pi]$,
\begin{equation}\label{E:testPDE:Burgers}
\left\{
\begin{array}{ll}
\frac{\partial U}{\partial t}&=-U\cdot\nabla U+\nu\Delta U,U=(u,v)^\top\\
U|_{t=0}&=U_0(x,y),
\end{array}
\right.
\end{equation}
with $(t,x,y)\in [0, 4]\times\Omega,$
where $\nu=0.05$

The training data is generated by a finite difference scheme on a $128\times 128$ mesh and then restricted to a $32 \times 32$ mesh. The temporal discretization is 2nd order Runge-Kutta with time step $\delta t = \frac{1}{1600}$, the spatial discretization uses a 2nd order upwind scheme for $\nabla$ and the central difference scheme for $\Delta$. The initial value $u_0(x,y),v_0(x,y)$ takes the following form,
\begin{equation}\label{E:random:initialize}
  w(x,y) = \frac{2w_{0}(x,y)}{\max_{x,y} |w_{0}|}+c
\end{equation}
where $w_{0}(x,y)=\sum_{|k|,|l|\leq 4}\lambda_{k,l}\cos(kx+ly)+\gamma_{k,l}\sin(kx+ly)$, $\lambda_{k,l},\gamma_{k,l}\sim\mathcal{N}(0,1),c\sim \mathcal{U}(-2,2)$. Here, $\mathcal{N}(0,1),\mathcal{U}(-2,2)$ represents the standard normal distribution and uniform distribution on $[-2,2]$ respectively. We also add noise to the generated data:
\begin{equation}\label{E:add:noise}
  \widehat{U}(t,x,y)=U(t,x,y)+0.001\times MW
\end{equation}
where $M=\max_{x,y,t}\{U(t,x,y)\}$, $W \sim \mathcal{N}(0,1)$.

Suppose we know a priori that the order of the underlying PDE is no more than 2, we can use two $\symnet_{12}^5$ to approximate the right-hand-side nonlinear response function of \eqref{E:testPDE:Burgers} component-wise. Let $U=(u,v)^\top$. We denote the two $\symnet_{12}^5$ as $Net_u$ and $Net_v$ respectively. Then, each $\delta t$-block of the PDE-Net 2.0 can be written as
\begin{eqnarray*}
\tilde{u}(t_{i+1},\cdot)&=&\tilde{u}(t_i,\cdot)+\delta t\cdot Net_u(D_{00}\tilde{u},D_{01}\tilde{u},\cdots,D_{20}\tilde{u},
D_{00}\tilde{v},\cdots,D_{20}\tilde{v}) \\
\tilde{v}(t_{i+1},\cdot)&=&\tilde{v}(t_i,\cdot)+\delta t\cdot Net_v(D_{00}\tilde{u},D_{01}\tilde{u},\cdots,D_{20}\tilde{u},
D_{00}\tilde{v},\cdots,D_{20}\tilde{v})
\end{eqnarray*}
where $\{D_{ij}:0\le i+j\leq 2\}$ are convolution operators.

During training and testing, the data is generated on-the-fly. The size of the filters that will be used is $5\times 5$. The total number of parameters in $Net_u$ and $Net_v$ is 336, and the number of trainable parameters in moment matrices is 105 for $5\times5$ filters ($6\times5\times5-45$ (constraint on moment)). During training, we use BFGS, instead of SGD, to optimize the parameters. We use 28 data samples per batch to train each $\delta t$-block and we only construct the PDE-Net 2.0 up to 9 layers, which requires totally 420 data samples during the whole training procedure.

\subsection{Results and discussions}

We first demonstrate the ability of the trained PDE-Net 2.0 to recover the analytic form of the unknown PDE model. We use the symbolic math tool in python to obtain the analytic form of $\symnet$. Results are summarized in Table \ref{Tab:Analytic:Burgers}. As one can see from Table \ref{Tab:Analytic:Burgers} that we can recover the terms of the Burgers' equation with good accuracy, and using learnable filters helps with the identification of the PDE model. Furthermore, the terms that are not included in the Burgers' equation all have relatively small weights in the $\symnet$ (see Figure \ref{Fig:Img:remainder}). 

\begin{table}[htp]
  \centering
  \caption{PDE model identification.}\label{Tab:Analytic:Burgers}
  \begin{tabular}{|c|l|}
    \hline
    \multirow{2}{*}{Correct PDE} & $u_t=-uu_x-vu_y+0.05(u_{xx}+u_{yy})$ \\
    & $v_t=-uv_x-vv_y+0.05(v_{xx}+v_{yy})$ \\
    \hline
    \multirow{2}{*}{Frozen-PDE-Net 2.0} & $u_t=-{\color{blue}0.906}uu_{x}-{\color{blue}0.901}vu_{y}+0.033u_{xx}+0.037u_{yy}$ \\
    & $v_t=-{\color{blue}0.907}vv_{y}-{\color{blue}0.902}uv_{x}+0.039v_{xx}+0.032v_{yy}$\\
    \hline
    \multirow{2}{*}{PDE-Net 2.0} & $u_t=-{\color{orange}0.979}uu_{x}-{\color{orange}0.973}u_{y}v+0.052u_{xx}+0.051u_{yy}$ \\
    & $v_t=-{\color{orange}0.973}uv_{x}-{\color{orange}0.977}vv_{y}+0.053v_{xx}+0.051v_{yy}$ \\
    \hline
  \end{tabular}
\end{table}

We also demonstrate the ability of the trained PDE-Net 2.0 in prediction, i.e. the ability to generalize. After the PDE-Net 2.0 with $n$ $\delta t$-blocks ($1\le n\le 9$) is trained, we randomly generate 1000 initial guesses based on \eqref{E:random:initialize} and \eqref{E:add:noise}, feed them to the PDE-Net 2.0, and measure the relative error between the predicted dynamics (i.e. the output of the PDE-Net 2.0) and the actual dynamics (obtained by solving \eqref{E:testPDE:Burgers} using high precision numerical scheme). The relative error between the true data $U$ and the predicted data $\tilde U$ is defined as
$\epsilon=\frac{\|\tilde{U}-U\|_2^2}{\|U-\bar{U}\|_2^2},$ where $\bar{U}$ is the spatial average of $U$. The error plots are shown in Figure \ref{Fig:error:curves}. Some of the images of the predicted dynamics are presented in Figure \ref{Fig:Img:Dynamics} and the errors maps are presented in Figure \ref{Fig:Img:Dynamics:errmap}. As we can see that, even trained with noisy data, the PDE-Net 2.0 is able to perform long-term prediction. Having multiple $\delta t$-blocks indeed improves predict accuracy. Furthermore, PDE-Net 2.0 performs significantly better than Frozen-PDE-Net 2.0.
This suggests that when we do not know the PDE, we cannot possibly know how to properly discretize it. Therefore, for inverse problems, it is better to learn both the PDE model and its discretization simultaneously.

\begin{figure}[htp]
\centering
\subfigure{\includegraphics[width = 0.8\textwidth]{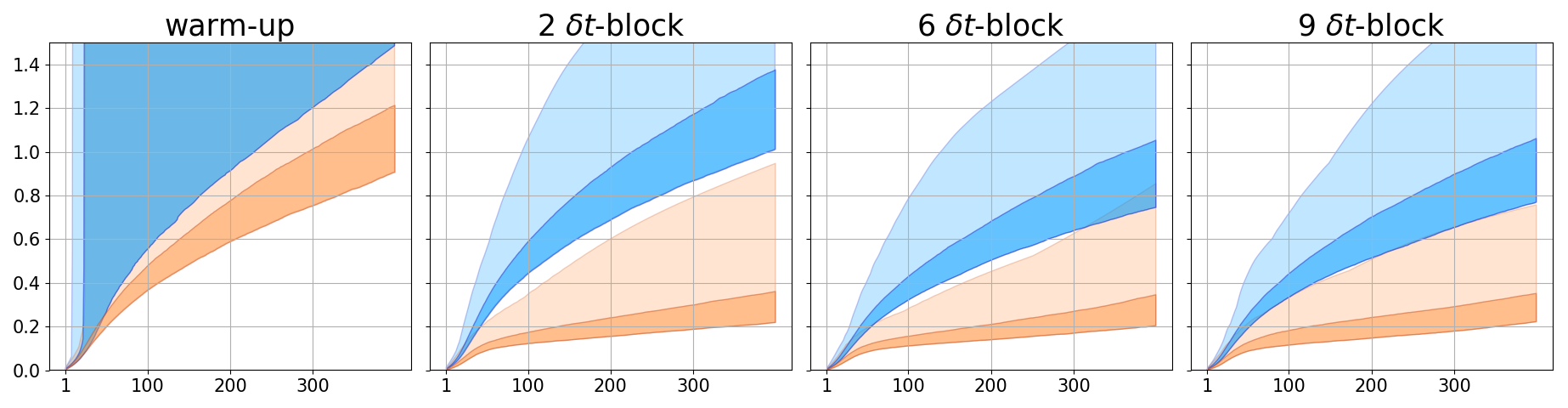}}
\caption{Burgers' equation: Prediction errors of the PDE-Net 2.0(orange) and Frozen-PDE-Net 2.0(blue) with $5\times 5$ filters. In each plot, the horizontal axis indicates the time of prediction in the interval $(0,400\times\delta t]=(0,4]$, and the vertical axis shows the relatively errors. The banded curves indicate the 25\%-100\% percentile of the relative errors among 1000 test samples. The darker regions indicate the 25\%-75\% percentile of the relative errors, which shows that PDE-Net 2.0 can predict very well in most cases.}\label{Fig:error:curves}
\end{figure}

\begin{figure}[htp]
\centering
\subfigure{\includegraphics[width = 0.45\textwidth]{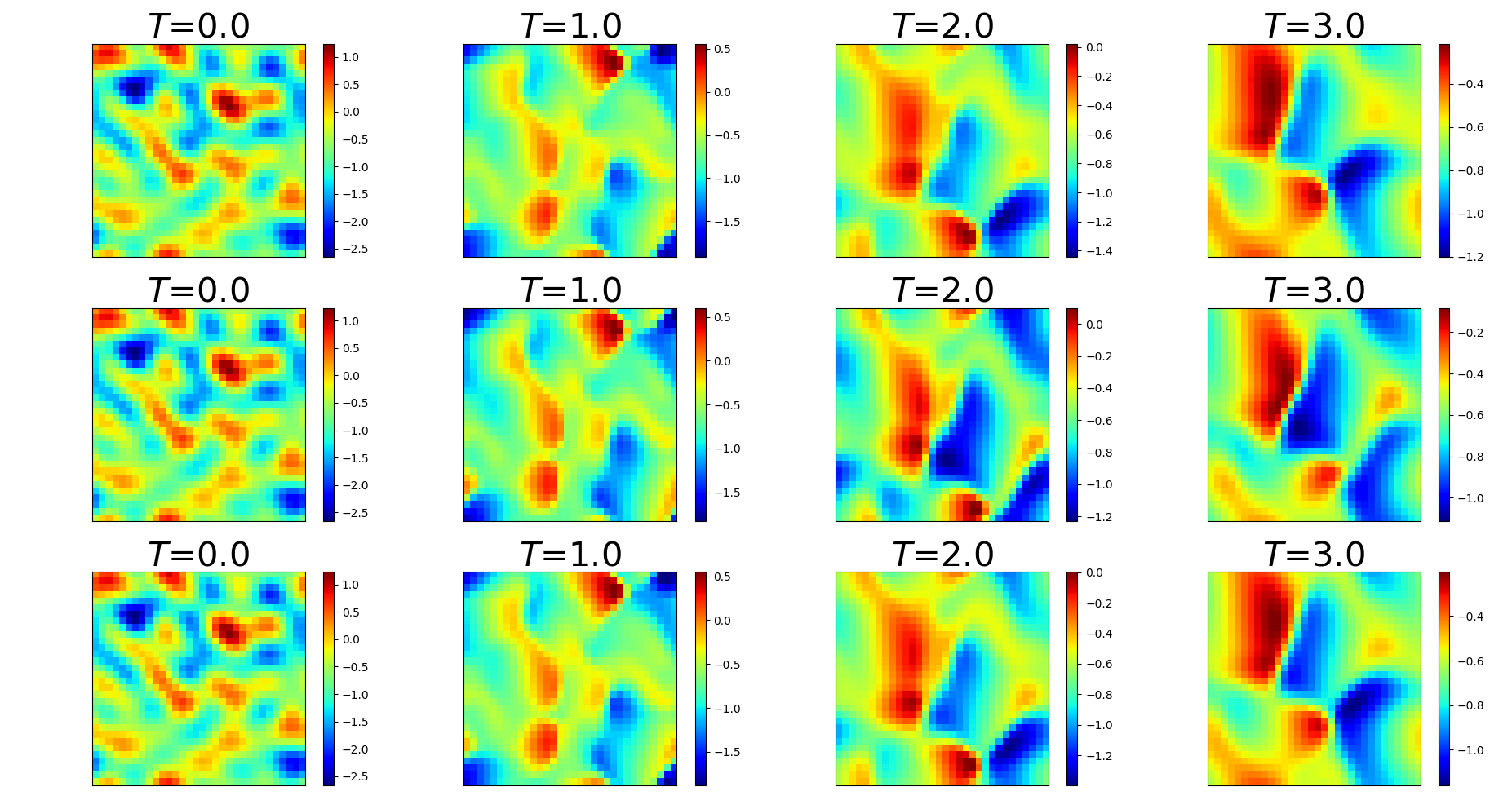}}
\subfigure{\includegraphics[width = 0.45\textwidth]{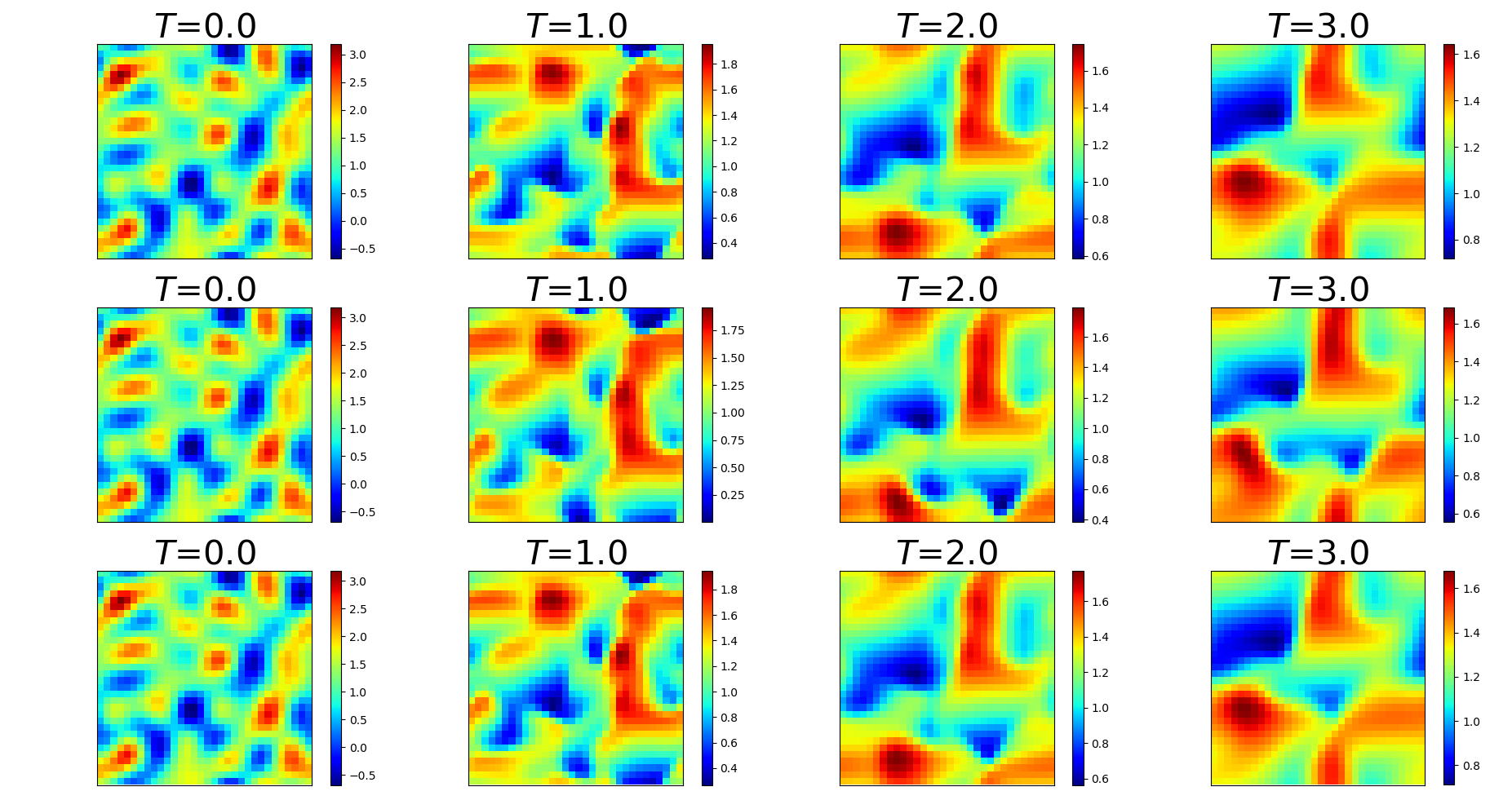}}\\
\subfigure{Dynamics of $u$}\hspace{2in}
\subfigure{Dynamics of $v$}
  \caption{Burgers' equation: The first row shows the images of the true dynamics. The last two rows show the images of the predicted dynamics using the Frozen-PDE-Net 2.0 (the second row) and PDE-Net 2.0 with 9 $\delta t$-blocks (the third row). Time step $\delta t=0.01$.}\label{Fig:Img:Dynamics}
\end{figure}

\begin{figure}[htp]
\centering
\subfigure{\includegraphics[width = 0.45\textwidth]{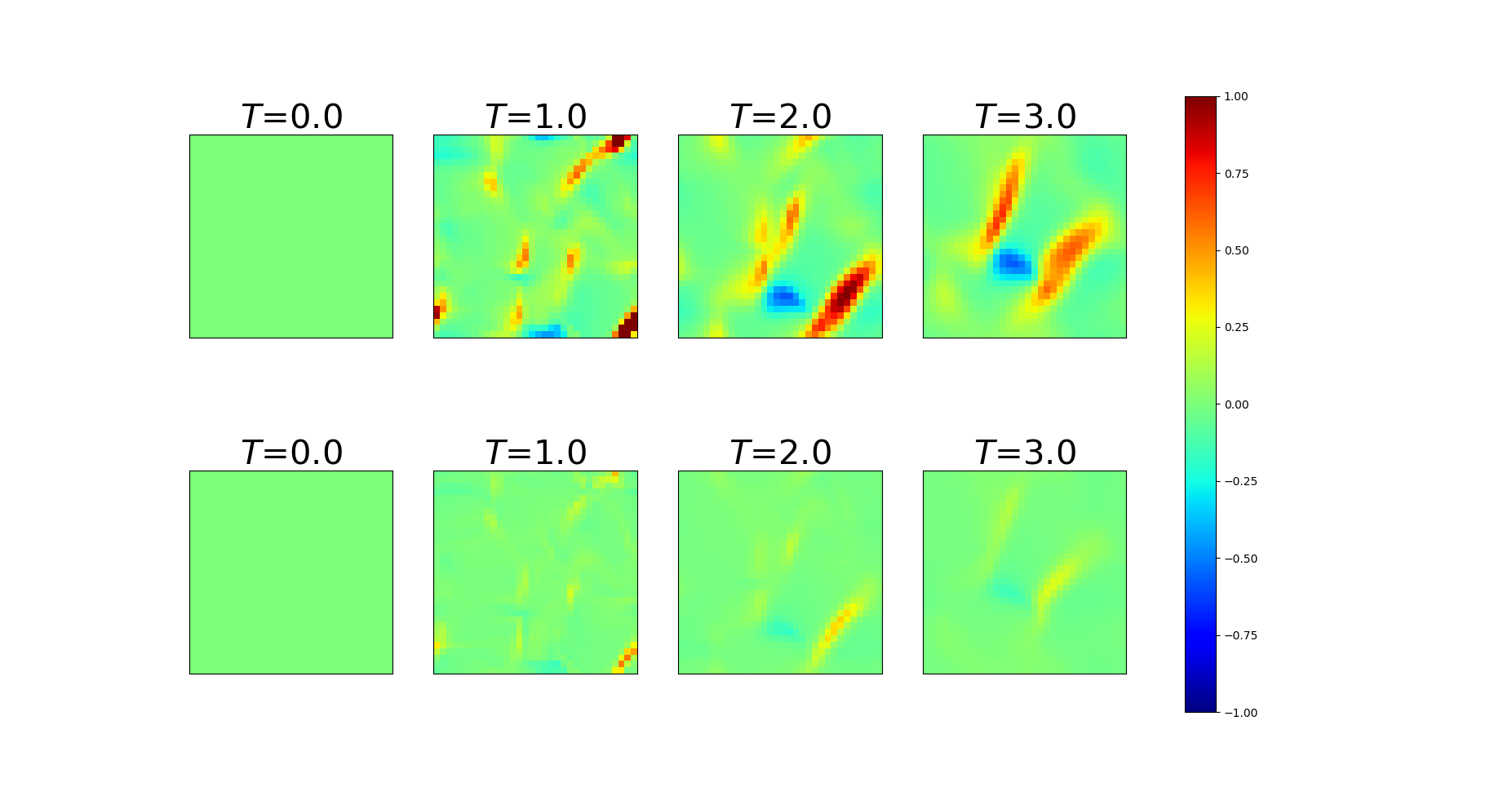}}
\subfigure{\includegraphics[width = 0.45\textwidth]{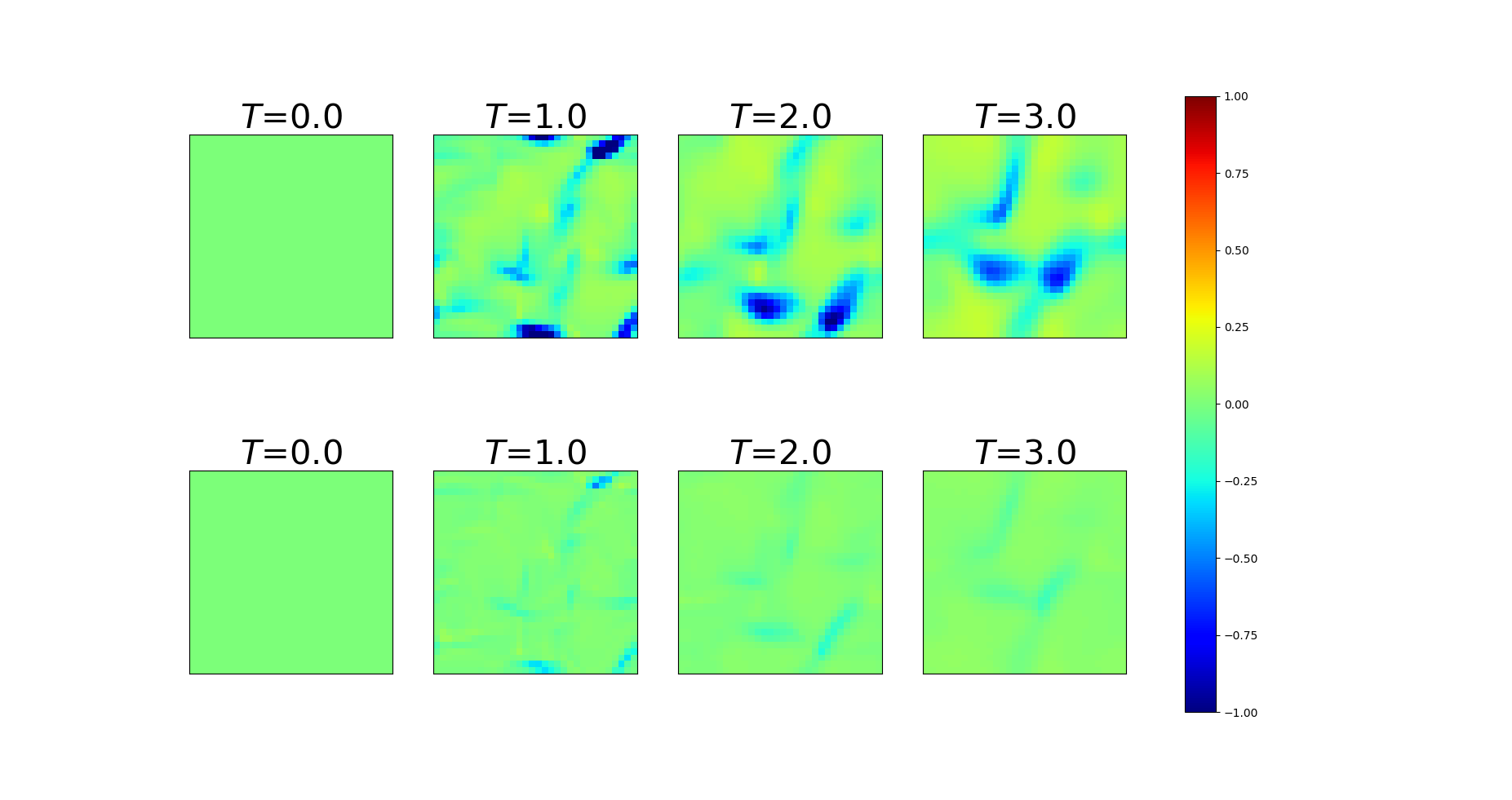}}\\
\subfigure{Error maps of $u$}\hspace{2in}
\subfigure{Error maps of $v$}
  \caption{Burgers' equation: The error maps of the Frozen-PDE-Net 2.0 (the first row) and PDE-Net 2.0 (second row) having 9 $\delta t$-blocks. Time step $\delta t=0.01$.}\label{Fig:Img:Dynamics:errmap}
\end{figure}

\subsection{Importance of \texorpdfstring{$L^{\symnet}$}{L-SymNet} and Pseudo-upwind}

This subsection demonstrates the importance of enforcing sparsity on the $\symnet$ and using pseudo-upwind. As we can see from Figure \ref{Fig:Img:remainder} that having sparsity constraints on the $\symnet$ helps with suppressing the weights on the terms that do not exist in the Burgers' equation. Furthermore, Figure \ref{Fig:error:curves:nosparse} and Figure \ref{Fig:error:curves:central} show that having sparsity constraint on the $\symnet$ or using pseudo-upwind can significantly reduce prediction errors. 

\begin{figure}[htp]
\centering
\subfigure{\includegraphics[width = 0.45\textwidth]{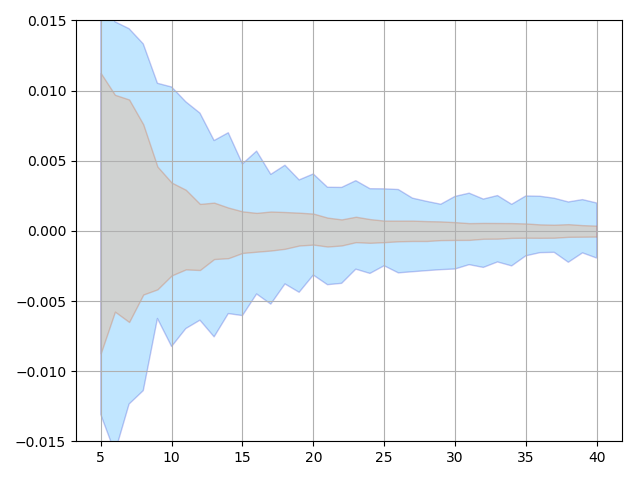}}
\subfigure{\includegraphics[width = 0.45\textwidth]{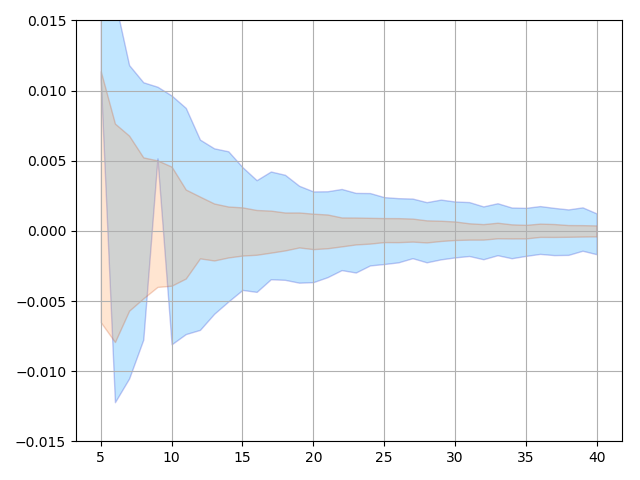}}
  \caption{Burgers' equation: The coefficients of the remainder terms for the $u$-component (left) and $v$-component (right) with the sparsity constraint on the $\symnet$ (orange) and without the sparsity constraint (blue). The bands for both cases are computed based on 20 independent training.}\label{Fig:Img:remainder}
\end{figure}

\begin{figure}[htp]
\centering
\subfigure{\includegraphics[width = 0.8\textwidth]{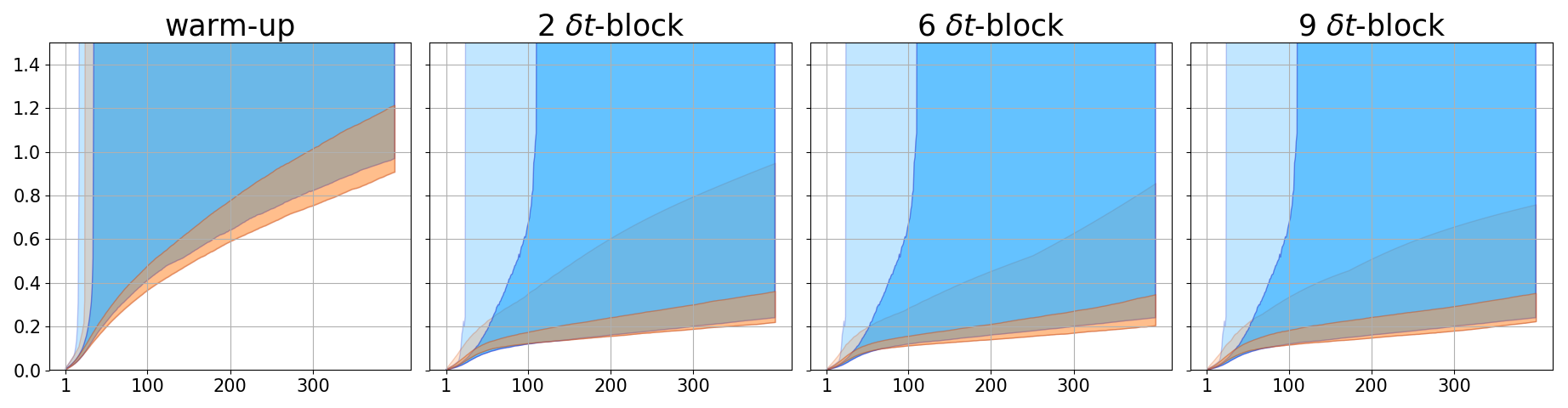}}
\caption{Burgers' equation: Prediction errors of the PDE-Net 2.0 with (orange) and without (blue) sparsity constraint on the $\symnet$. In each plot, the horizontal axis indicates the time of prediction in the interval $(0,400\times\delta t]=(0,4]$, and the vertical axis shows the relatively errors. The banded curves indicate the 25\%-100\% percentile of the relative errors among 1000 test samples.}\label{Fig:error:curves:nosparse}
\end{figure}

\begin{figure}[htp]
\centering
\subfigure{\includegraphics[width = 0.8\textwidth]{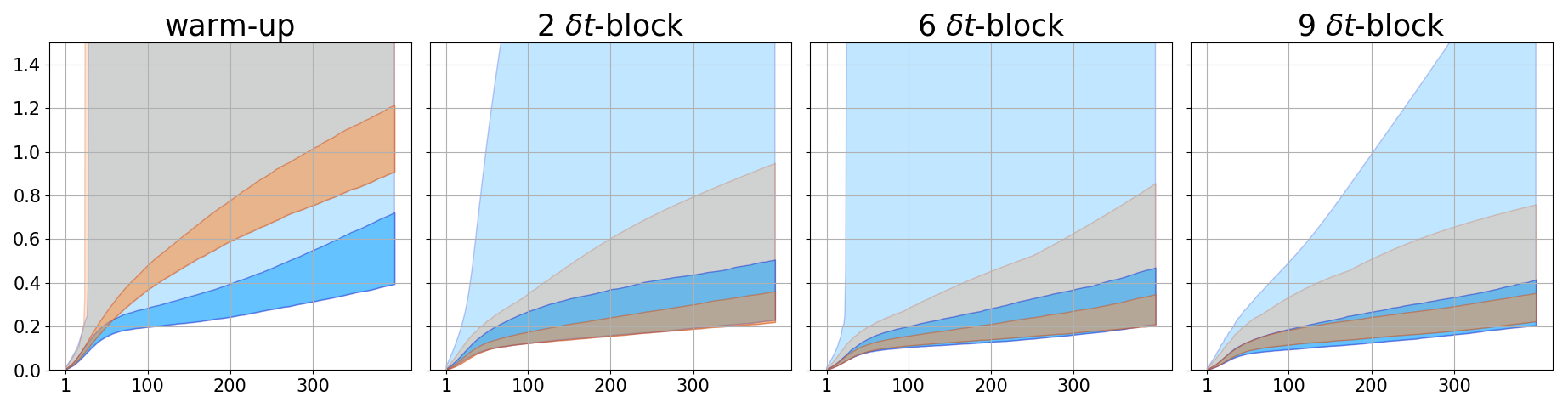}}
\caption{Burgers' equation: Prediction errors of the PDE-Net 2.0 with (orange) and without (blue) pseudo-upwind in each $\delta t$-block. In each plot, the horizontal axis indicates the time of prediction in the interval $(0,400\times\delta t]=(0,4]$, and the vertical axis shows the relatively errors. The banded curves indicate the 25\%-100\% percentile of the relative errors among 1000 test samples.}\label{Fig:error:curves:central}
\end{figure}

\section{Numerical Studies: Diffusion Equation}\label{Sec:Results:Heat}
Diffusion phenomenon has been studied in many applications in physics e.g. the collective motion of micro-particles in materials due to random movement of each particle, or modeling the distribution of temperature in a given region over time.

\subsection{Simulated data, training and testing}
Consider the 2-dimensional heat equation with periodic boundary condition on $\Omega=[0,2\pi]\times [0,2\pi]$
\begin{equation}\label{E:testPDE:Heat}
\left\{
\begin{array}{ll}
\frac{\partial u}{\partial t}&=c\Delta u,(t,x,y)\in [0, 1]\times\Omega,\\
u|_{t=0}&=u_0(x,y),
\end{array}
\right.
\end{equation}
where $c=0.1$. The training data of the heat equation is generated by 2nd order Runge-Kutta in time with $\delta t = \frac{1}{1600}$, and central difference scheme in space on a $128 \times 128$ mesh. We then restrict the data to a $32 \times 32$ mesh. The initial value $u_0(x,y)$ is also generated from \eqref{E:random:initialize}.

\subsection{Results and discussions}

The demonstration on the ability of the trained PDE-Net 2.0 to identify the PDE model is given in Table \ref{Tab:Analytic:Heat}. As one can see from Table \ref{Tab:Analytic:Heat} that we can recover the terms of the heat equation with good accuracy. Furthermore, all the terms that are not included in the heat equation have much smaller weights in the $\symnet$.

We also demonstrate the ability of the trained PDE-Net 2.0 in prediction. The testing method is exactly the same as the method described in Section \ref{Sec:Results:Burgers}. Comparisons between PDE-Net 2.0 and Frozen-PDE-Net 2.0 are shown in Figure \ref{Fig:error:curves:Heat}, where we can clearly see the advantage of learning the filters. Visualization of the predicted dynamics is given in Figure \ref{Fig:Img:Dynamics:Heat}. All these results show that the learned PDE-Net 2.0 performs well in prediction.

\begin{table}[htp]
\centering
\caption{PDE model identification. Note that the largest term of the remainders (i.e. the ones that are not included in the table) is 8e-4$u_y$ (for Frozen-PDE-Net 2.0) and 6e-5$u$ (for PDE-Net 2.0).}\label{Tab:Analytic:Heat}
\begin{tabular}{|c|l|}
\hline
Correct PDE & $u_t=0.1(u_{xx}+u_{yy})$ \\
\hline
Frozen-PDE-Net 2.0 & $u_t=0.103u_{xx}+0.103u_{yy}$ \\
\hline
PDE-Net 2.0 & $u_t=0.999u_{xx}+0.998u_{yy}$ \\
\hline
\end{tabular}
\end{table}

\begin{figure}[htp]
\centering
\includegraphics[width = 0.8\textwidth]{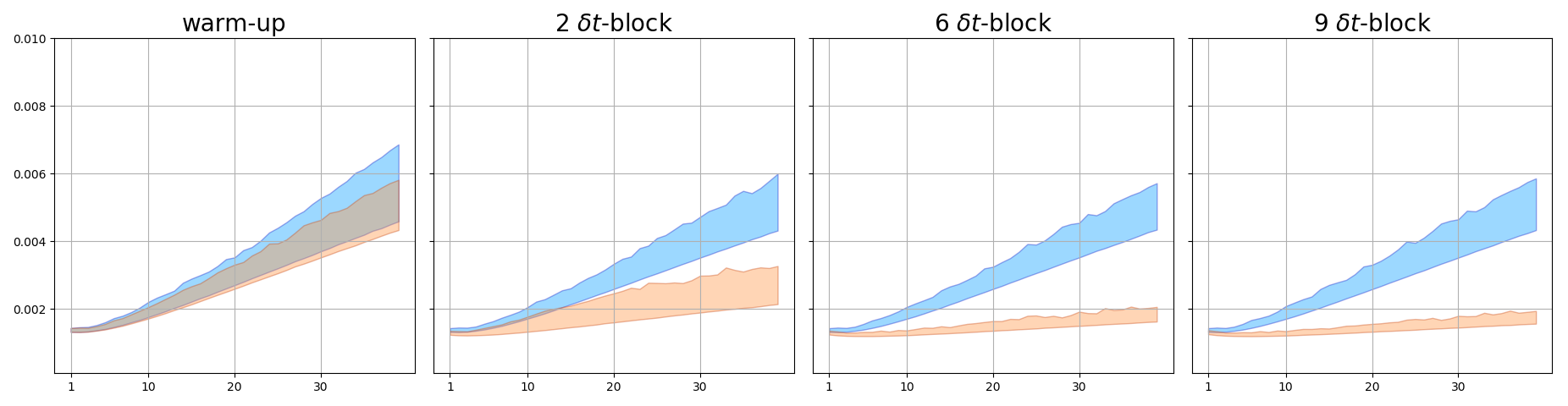}
\caption{Diffusion equation: Prediction errors of the PDE-Net 2.0 (orange) and Frozen-PDE-Net 2.0 (blue). In each plot, the horizontal axis indicates the time of prediction in the interval $(0,150\times\delta t]=(0,1.5]$, and the vertical axis shows the relative errors. The banded curves indicate the 25\%-100\% percentile of the relative errors among 1000 test samples.}\label{Fig:error:curves:Heat}
\end{figure}

\begin{figure}[htp]
\centering
\includegraphics[width = 0.45\textwidth]{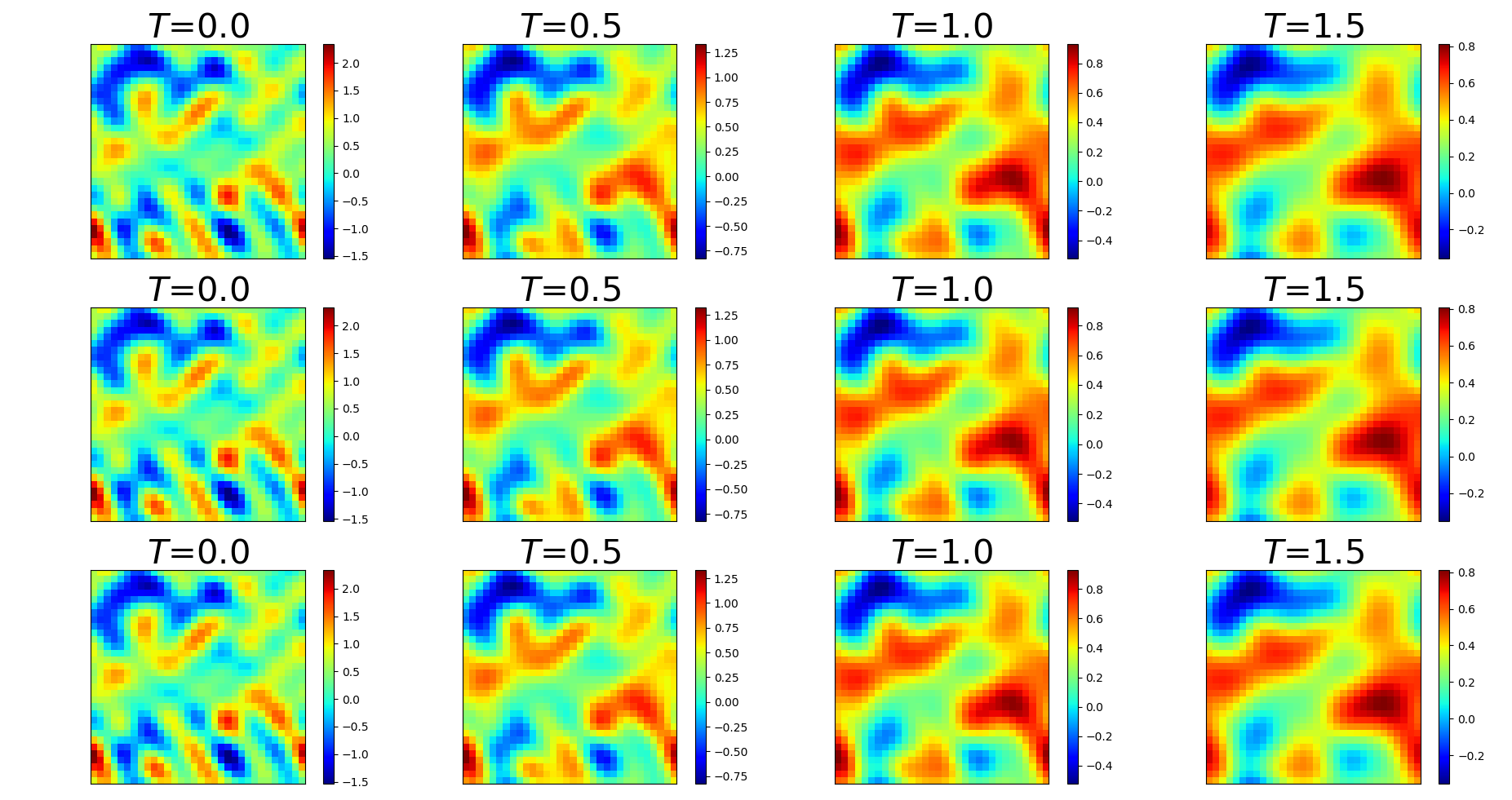}
\caption{Diffusion equation: The first row shows the images of the true dynamics. The second row shows the predicted dynamics using the Frozen-PDE-Net 2.0 with 9 $\delta t$-blocks. The third row shows the predicted dynamics using the PDE-Net 2.0 with 9 $\delta t$-blocks. Here, $\delta t=0.01$.}\label{Fig:Img:Dynamics:Heat}
\end{figure}

\section{Numerical Studies: Convection Diffusion Equation with A Reactive Source}\label{Sec:Results:CDR}
Convection diffusion systems are mathematical models which correspond to the transferring of some physical quantities such as energy or materials due to diffusion and convection. Specifically, a convection diffusion system with a reactive source can be used to model a large range of chemical systems in which the transferring of materials competes with productions of materials induced by several chemical reactions.

\subsection{Simulated data, training and testing}
Consider a 2-dimensional convection diffusion equation with a reactive source and the periodic boundary condition on $\Omega=[0,2\pi]\times[0,2\pi]$:
\begin{equation}\label{E:testPDE:CDR}
  \begin{split}
    u_t&=-uu_x-vu_y+\nu \Delta u+\lambda(A)u-\omega(A)v, \\
    v_t&=-uv_x-vv_y+\nu \Delta v+\omega(A)u+\lambda(A)v, \\
    & A^2=u^2+v^2,\omega=-\beta A^2, \lambda=1-A^2,
  \end{split}
\end{equation}
where $(t,x,y)\in [0, 1.5]\times\Omega,$ and $\nu=0.1,\beta=1$.
Training data is generated the same way as what we did for Burgers' equation in Section \ref{Sec:Results:Burgers}. A 2nd order Runge-Kutta with time step $\delta t=1/10000$ is adopted for temporal discretization. We choose a 2nd order upwind scheme for the convection terms and the central difference scheme for $\Delta$ on a $128 \times 128$ mesh. We then restrict the data to a $32 \times 32$ mesh. Noise is added the same way as the Burgers' equation. The initial values $u_0(x,y), v_0(x,y)$ are also generated from \eqref{E:random:initialize}.

\subsection{Results and discussions}

The capability of the trained PDE-Net 2.0 to identify the underlying PDE model is demonstrated in Table \ref{Tab:Analytic:CDR}. As one can see that we can recover the terms of the reaction convection diffusion equation with good accuracy. Furthermore, all the terms that are not included in this equation have relatively small weights in the $\symnet$.

We also demonstrate the ability of the trained PDE-Net 2.0 in prediction. The testing method is exactly the same as the method described in Section \ref{Sec:Results:Burgers}. Comparisons between PDE-Net 2.0 and Frozen-PDE-Net 2.0 are shown in Figure \ref{Fig:error:curves:CDR}. Visualization of the predicted dynamics and errors maps are given in Figure \ref{Fig:Img:Dynamics:CDR} and Figure \ref{Fig:Img:Dynamics:errmap:CDR}. Similar to what we observed in Section \ref{Sec:Results:Burgers}, we can clearly see the benefit from learning discretizations. PDE-Net 2.0 obtains more accurate estimations of the coefficients for the nonlinear convection terms (i.e. the term $-uu_x-vu_y$ in Table \ref{Tab:Analytic:CDR}) and makes more accurate predictions (Figure \ref{Fig:error:curves:CDR}) than  Frozen-PDE-Net 2.0.

\begin{table}[htp]
\centering
  \caption{PDE model identification.}\label{Tab:Analytic:CDR}
\begin{tabular}{|c|l|}
\hline
  \multirow{2}{*}{Correct PDE} & $u_t=-uu_x-vu_y+0.1\Delta u+(v-u)(u^2+v^2)+u$ \\
  & $v_t=-uv_x-vv_y+0.1\Delta v-(v+u)(u^2+v^2)+v$ \\
\hline
  \multirow{4}{*}{Frozen-PDE-Net 2.0} & $u_t=-{\color{blue}0.86}uu_x-{\color{blue}0.90}vu_y+0.09u_{xx}+0.09u_{yy}$ \\
                                      & $\ \ \ \ \ +1.02u^2v-1.02u^3-1.01uv^2+1.01u+0.99v^3$ \\
                                      & $v_t=-{\color{blue}0.87}uv_x-{\color{blue}0.85}vv_y+0.09v_{xx}+0.09v_{yy}$ \\
                                      & $\ \ \ \ \ +1.04u^2v-1.02uv^2-1.01v^3+0.99v-0.99u^3$ \\
\hline
  \multirow{4}{*}{PDE-Net 2.0} & $u_t=-{\color{orange}0.98}vu_y-{\color{orange}0.93}uu_x+0.10u_{xx}+0.10u_{yy}$ \\
                               & $\ \ \ \ \ -1.05uv^2+0.99v^3-0.98u^3+0.98u+0.97u^2v$ \\ 
                               & $v_t=-{\color{orange}0.99}uv_x-{\color{orange}0.96}vv_y+0.10v_{yy}+0.10v_{xx}$ \\
                               & $\ \ \ \ \ -1.04u^2v-1.02v^2-1.02uv^2+1.01v-1.00u^3$ \\
\hline
\end{tabular}
\end{table}

\begin{figure}[htp]
\centering
\includegraphics[width = 0.8\textwidth]{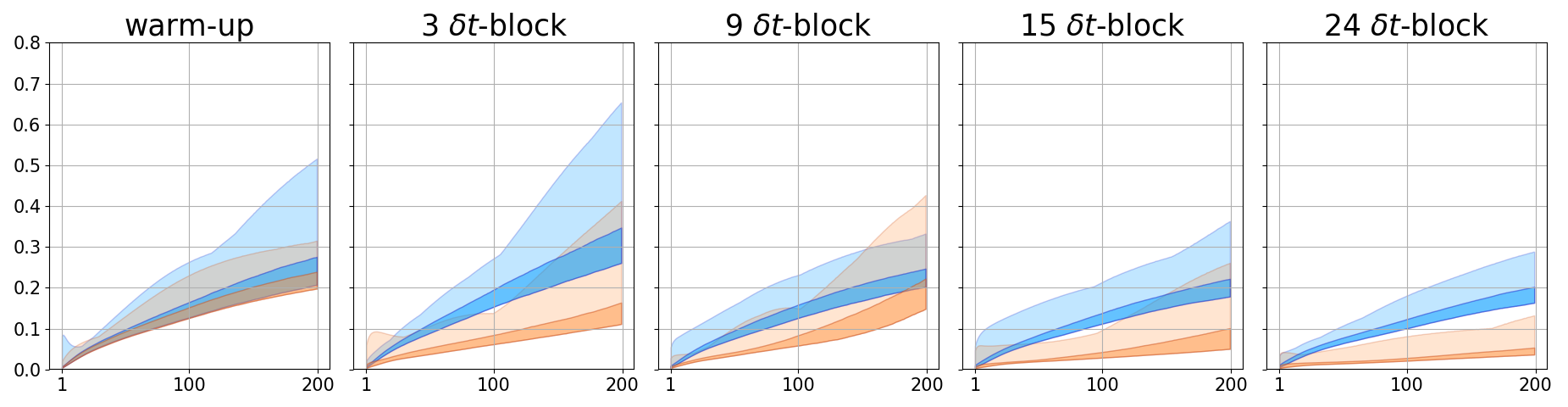}
\caption{Reaction convection diffusion: Prediction errors of the PDE-Net 2.0 (orange) and Frozen-PDE-Net 2.0 (blue). In each plot, the horizontal axis indicates the time of prediction in the interval $(0,200\times\delta t]=(0,2]$, and the vertical axis shows the relative errors. The banded curves indicate the 25\%-100\% percentile of the relative errors among 1000 test samples.}\label{Fig:error:curves:CDR}
\end{figure}

\begin{figure}[htp]
\centering
\includegraphics[width = 0.45\textwidth]{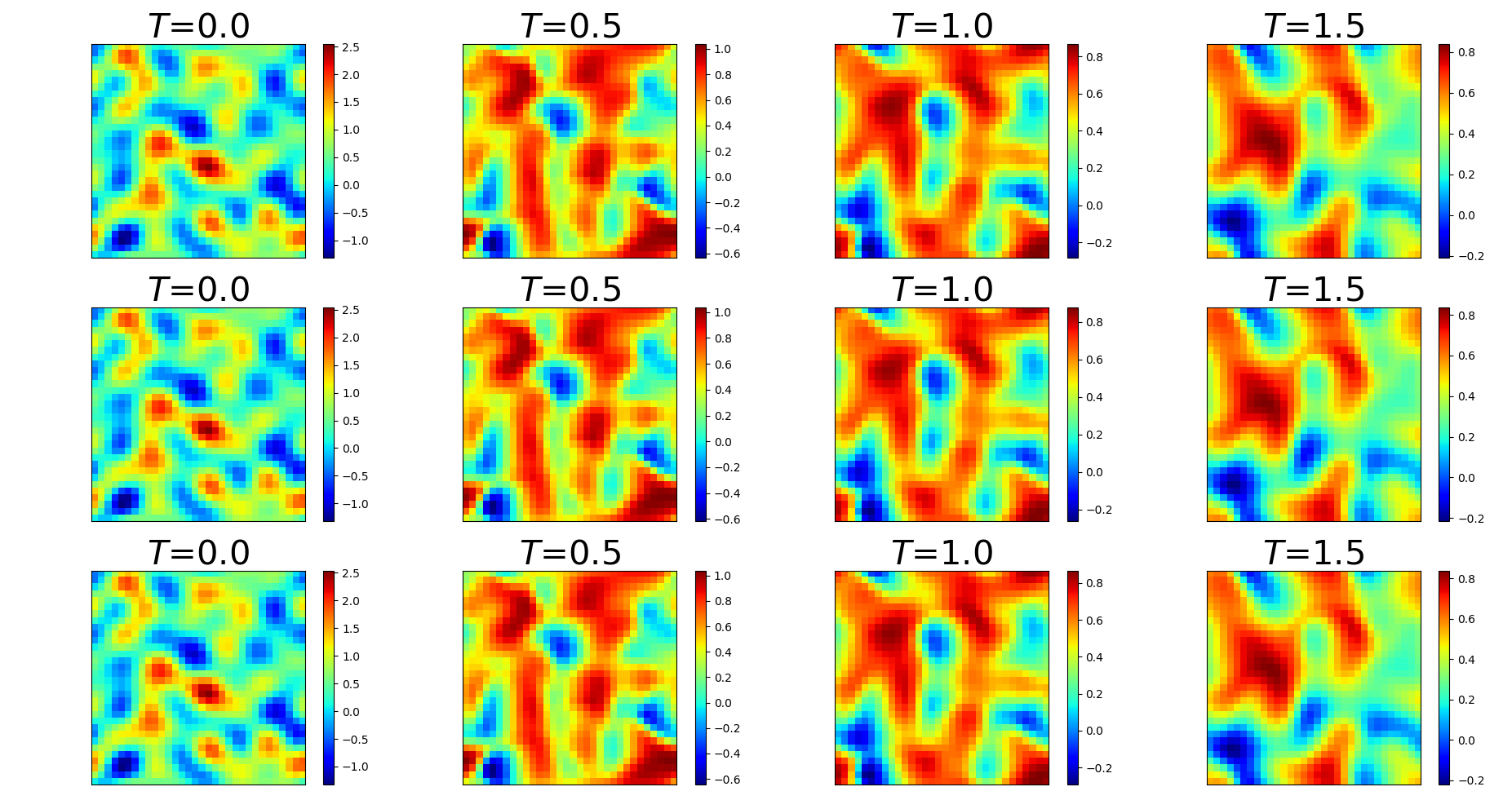}
\includegraphics[width = 0.45\textwidth]{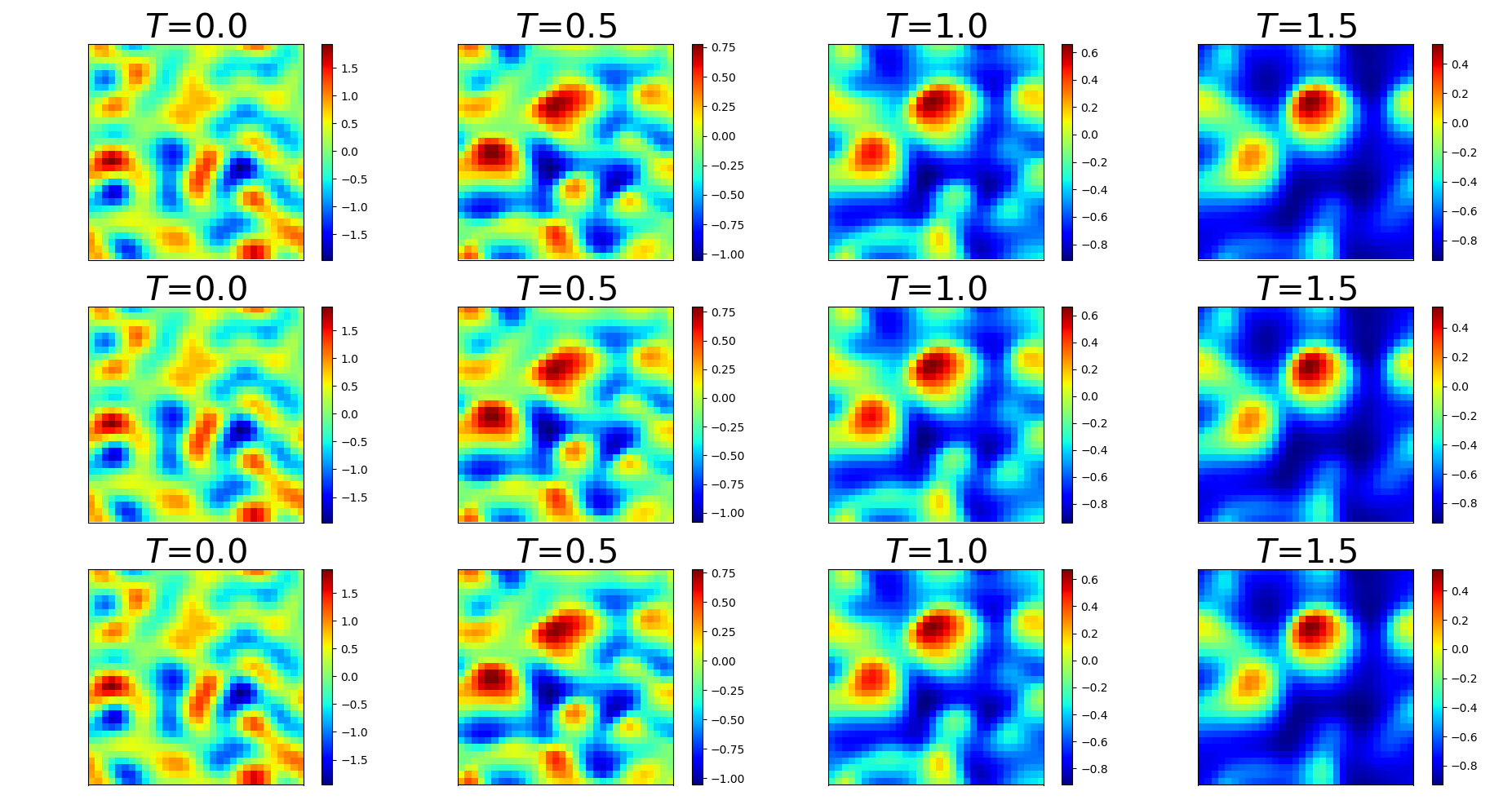}\\
\subfigure{Dynamics of $u$}\hspace{2in}
\subfigure{Dynamics of $v$}
\caption{Reaction convection diffusion: The first row shows the images of the true dynamics. The second row shows the predicted dynamics using the Frozen-PDE-Net 2.0 with 24 $\delta t$-blocks. The third row shows the predicted dynamics using the PDE-Net 2.0 with 24 $\delta t$-blocks. Here, $\delta t=0.01$.}\label{Fig:Img:Dynamics:CDR}
\end{figure}

\begin{figure}[htp]
\centering
\subfigure{\includegraphics[width = 0.45\textwidth]{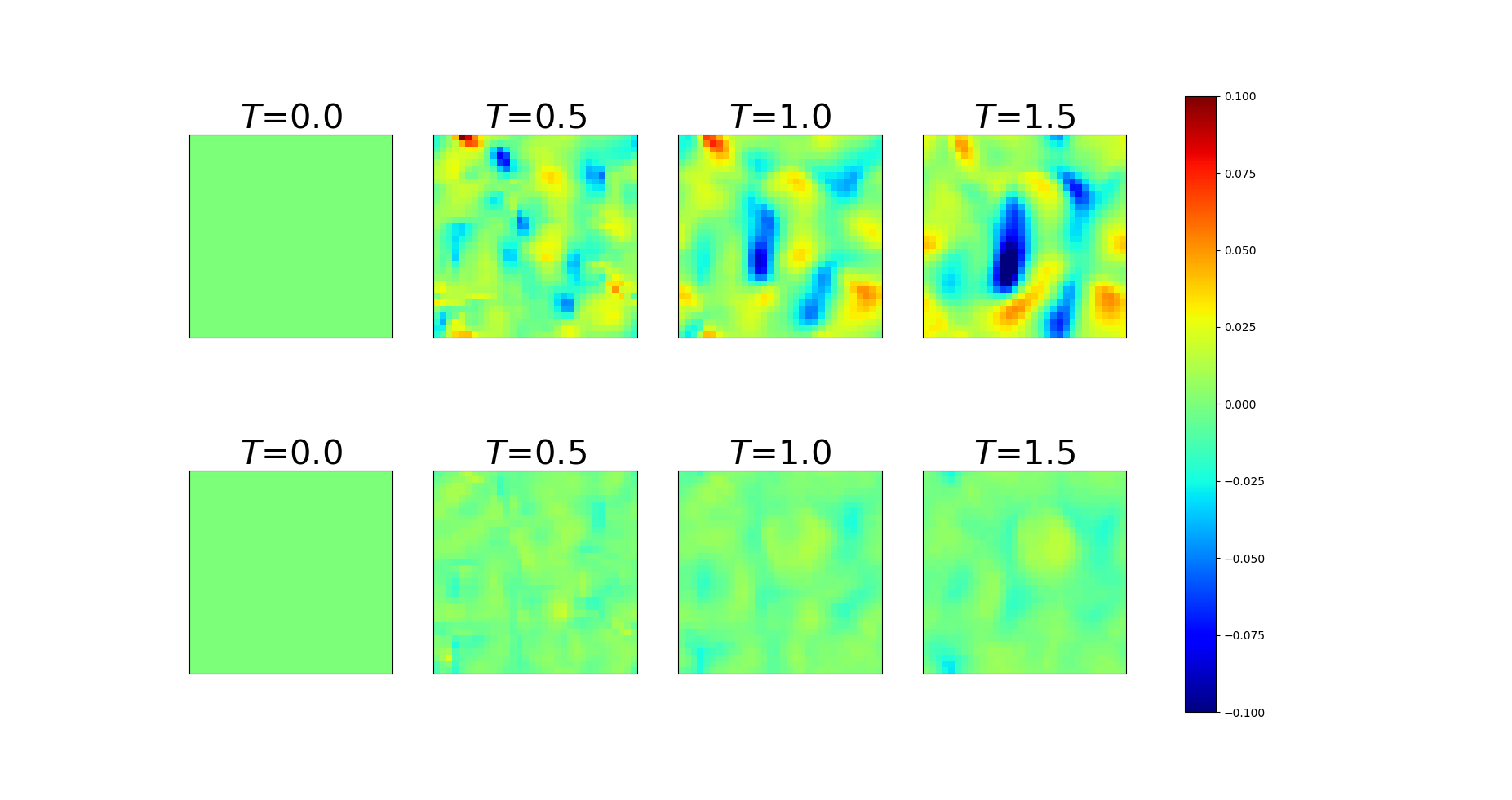}}
\subfigure{\includegraphics[width = 0.45\textwidth]{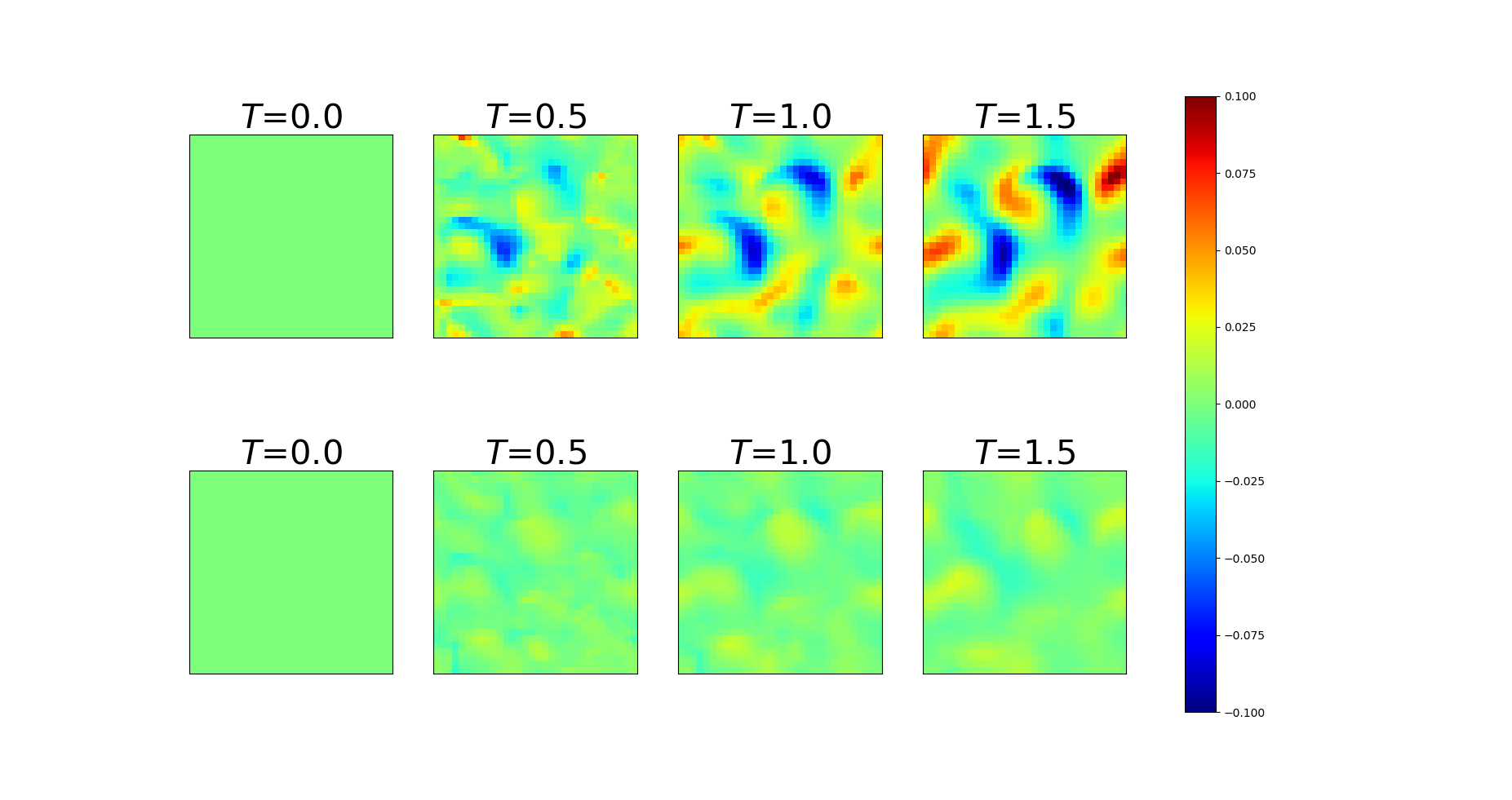}}\\
\subfigure{Error maps of $u$}\hspace{2in}
\subfigure{Error maps of $v$}
  \caption{Reaction convection diffusion: The error maps of the Frozen-PDE-Net 2.0 (the first row) and PDE-Net 2.0 (second row) having 24 $\delta t$-blocks. Time step $\delta t=0.01$.}\label{Fig:Img:Dynamics:errmap:CDR}
\end{figure}

\section{Conclusions and Future Work}

In this paper, we proposed a numeric-symbolic hybrid deep network, called PDE-Net 2.0, for PDE model recovery from observed dynamic data. PDE-Net 2.0 is able to recover the analytic form of the PDE model with minor assumptions on the mechanisms of the observed dynamics. For example, it is able to recover the analytic form of Burgers' equation with good confidence without any prior knowledge on the type of the equation. Therefore, PDE-Net 2.0 has the potential to uncover potentially new PDEs from observed data. Furthermore, after training, the network can perform accurate long-term prediction without re-training for new initial conditions. The limitations and possible future extensions of the current version of PDE-Net 2.0 is twofold: 1) having only addition and multiplication in the $\symnet$ may still be too restrictive, and one may include division as an additional operation in $\symnet$ to further improve its expressive power; 2) using forward Euler as temporal discretization is the most straightforward treatment, while a more sophisticated temporal scheme may help with the model recovery and prediction. Both of these worth further exploration. Furthermore, we would like to apply the network to real biological dynamic data. We will further investigate the reliability of the network and explore the possibility to uncover new dynamical principles that have meaningful scientific explanations.

\section*{Acknowledgments}
Zichao Long is supported by The Elite Program of Computational and Applied Mathematics for PhD Candidates of Peking University. Yiping Lu is supported by the Elite Undergraduate Training Program of the School of Mathematical Sciences at Peking University. Bin Dong is supported in part by Beijing Natural Science Foundation (No. 180001) and Beijing Academy of Artificial Intelligence (BAAI).

\bibliographystyle{unsrt}  

\bibliography{ref}

%
%

%

\end{document}